\theoremstyle{plain}
\newtheorem{theorem}{Theorem}[section] 
\newtheorem{lemma}{Lemma}[section]
\newtheorem{remark}{Remark}[section]
\newtheorem{corollary}{Corollary}[section]
\theoremstyle{definition}
\newtheorem{defn}{Definition}[section] % definition numbers are dependent on theorem numbers
\newtheorem{exmp}{Example}[section] % same for example numbers
\begin{document}

\twocolumn[
\icmltitle{The Complexity of Morality: Checking Markov Blanket Consistency with DAGs via Morality}

% It is OKAY to include author information, even for blind
% submissions: the style file will automatically remove it for you
% unless you've provided the [accepted] option to the icml2019
% package.

% List of affiliations: The first argument should be a (short)
% identifier you will use later to specify author affiliations
% Academic affiliations should list Department, University, City, Region, Country
% Industry affiliations should list Company, City, Region, Country

% You can specify symbols, otherwise they are numbered in order.
% Ideally, you should not use this facility. Affiliations will be numbered
% in order of appearance and this is the preferred way.
%\icmlsetsymbol{equal}{*}

\begin{icmlauthorlist}
\icmlauthor{Yang Li}{monash}
\icmlauthor{Kevin Korb}{monash}
\icmlauthor{Lloyd Allison}{monash}
\end{icmlauthorlist}

\icmlaffiliation{monash}{Faculty of Information Technology, Monash University, Clayton, Australia}

\icmlcorrespondingauthor{Yang Li}{yang.kelvinli@monash.edu}
%\icmlcorrespondingauthor{Eee Pppp}{ep@eden.co.uk}

% You may provide any keywords that you
% find helpful for describing your paper; these are used to populate
% the "keywords" metadata in the PDF but will not be shown in the document
%\icmlkeywords{Machine Learning, ICML}

\vskip 0.1in
]

% this must go after the closing bracket ] following \twocolumn[ ...

% This command actually creates the footnote in the first column
% listing the affiliations and the copyright notice.
% The command takes one argument, which is text to display at the start of the footnote.
% The \icmlEqualContribution command is standard text for equal contribution.
% Remove it (just {}) if you do not need this facility.

\printAffiliationsAndNotice{}  % leave blank if no need to mention equal contribution
%\printAffiliationsAndNotice{\icmlEqualContribution} % otherwise use the standard text.

\begin{abstract}
A family of Markov blankets in a faithful Bayesian network satisfies the symmetry and consistency properties. In this paper, we draw a bijection between families of consistent Markov blankets and moral graphs. We define the new concepts of weak recursive simpliciality and perfect elimination kits. We prove that they are equivalent to graph morality. In addition, we prove that morality can be decided in polynomial time for graphs with maximum degree less than $5$, but the problem is NP-complete for graphs with higher maximum degrees. 
\end{abstract}

\section{Introduction}
Introduced by \citet{pearl1988probabilistic} as the smallest subset of
variables in a Bayesian network, given which the target variable is
conditionally independent from the rest of the variables, the Markov blanket\footnote{Originally, this is how \citet{pearl1988probabilistic}
  defined ``Markov boundaries'', but the literature has migrated
  ``Markov blankets'' to this minimalist sense.}
  has became popular for feature selection \cite{koller1996toward} and scaling up learning causal models \cite{ramsey2016million}. For a comprehensive review of Markov blanket discovery and its applications in structure learning, we refer the readers to \cite{aliferis2010localb}.  
In a faithful Bayesian network, the \textit{Markov blanket} of a target variable consists of its parents, children and children's other parents (a.k.a., spouses) (Figure \ref{fg:mb_example}).  
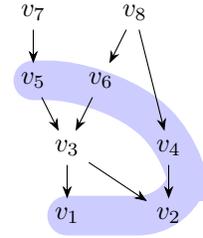
\begin{figure}
\centering
\begin{tikzpicture}[scale=0.9]
\begin{scope}[>={Stealth[black]},              
              every edge/.style={draw=black}]
    \node (A) at (0.5,0) {$v_1$};
    \node (B) at (2,0) {$v_2$};
    \node (C) at (0.5,1) {$v_3$};
    \node (D) at (2,1) {$v_4$};
    \node (E) at (0,2) {$v_5$};
    \node (F) at (1,2) {$v_6$};
    \node (G) at (0,3) {$v_7$};    
    \node (H) at (1.5,3) {$v_8$};   
    \path [->] (G) edge (E);
    \path [->] (H) edge (F);
    \path [->] (H) edge (D);
    \path [->] (E) edge (C);
    \path [->] (F) edge (C);
    \path [->] (C) edge (A);
    \path [->] (C) edge (B);
    \path [->] (D) edge (B);
\end{scope}
% Highlight the A_1 -> B_1 -> C_2 path. Use layers to draw
% behind everything.
\begin{pgfonlayer}{background}
	\draw[rounded corners=2em,line width=1.5em,blue!20,cap=round]
		(A.center) -- (B.east) -- (D.east) -- (F.center) -- (E.center);
\end{pgfonlayer}
\end{tikzpicture}
\caption{The Markov blanket of $v_3$ in this faithful Bayesian network is $\{v_5,v_6,v_1,v_2,v_4\}$.}
%\vskip -0.22in
\label{fg:mb_example}
\end{figure}
A set $B(V)=\{B(v_1), \dots, B(v_n)\}$ of subsets of variables is considered to be a valid family of Markov blankets for the variables $V=\{v_1, \dots,v_i\}$ in a faithful Bayesian network if it satisfies the symmetry and consistency properties. The \textit{symmetry property}, which states $v_i \in B(v_j)$ if and only if $v_j \in B(v_i)$ is a consequence of the graphical interpretation of Markov blankets in faithful Bayesian networks. The \textit{consistency property} guarantees that there exists at least one directed acyclic graph (DAG) s.t. the Markov blanket of $v_i$ in it equals $B(v_i)$ for all $v_i \in V$. 

Until recently, there have been few literature paying attention to
Markov blankets consistency. A learned family of Markov blankets, if
not read off from a DAG, often does not tell explicit relations among
variables. This does not stop symmetry being quickly checked and
enforced (in various of ways), but makes it non-trivial to check
consistency. Without being consistent with a DAG, these Markov
blankets could lead to contradictory local structures, which have to
be resolved in applying local to global structure learning, which is
our underlying motivation.

In this paper, we relate graph morality to Markov blanket consistency,
and present polynomial time algorithms for checking morality for
undirected graphs with various of maximum degrees. In Section
\ref{sec:pre} we develop the important concepts for this paper. In
Section \ref{sec:wrs}, we prove the equivalence of certain properties
to morality. In Section \ref{sec:complexity}, we analyse the
computational complexity of checking morality for graphs with various
maximum degree.

%The contribution of this paper is threefold. First, we introduce the concepts of \textit{weakly recursively simplicial} graphs and \textit{perfect elimination kit} to help checking morality and proving important properties of moral graphs. Second, we develop polynomial time algorithms for checking morality for maximum degree $3$ and $4$ graphs. And we prove that checking morality for graphs with maximum degree $5$ and above is NP-complete. Third, all the algorithms for checking morality, including a backtracking algorithm for graphs with maximum degree $5$ and above, give a way of \textit{immoralizing} moral graphs to obtain DAGs. 

\section{Preliminary} 
\label{sec:pre}
Throughout this paper, we consider only connected graphs. For
simplicity, we refer to them as \textit{graphs}, which is a pair
$G = (V, E)$ comprising a set $V$ of vertices (or nodes) together with
a set $E$ of edges (or arcs) connecting pairs in $V$. If $E$ is a set
of ordered pairs of distinct vertices in $V$, then $G$ is a
\textit{directed graph}. For vertices $u,v \in V$, we use $d(u)$ to denote the degree of $u$, $\Delta(G)$ to denote the maximum degree of $G$, $uv$ to
represent an (undirected) edge and $\overrightarrow{uv}$ to represent
a directed edge from $u$ to $v$. A \textit{hybrid graph} is a graph
consisting of both directed and undirected edges. The
\textit{skeleton} of a hybrid graph is the undirected graph obtained
by dropping directions of all directed edges. A directed graph is
called a \textit{directed acyclic graph} if it contains no directed
cycles. In a DAG $G=(V,E)$, $u$ is a \textit{parent} of $v$, denoted
by $u \in P_G(v)$ (or $v$ is a \textit{child} of $u$) if there is a
directed edge $\overrightarrow{uv} \in
E$. %And $u$ is an \textit{ancestor} of $v$ (or $v$ is a \textit{descendent} of $u$) if there is a directed path from $u$ to $v$. Furthermore, $v$ is a \textit{nondescendent} of $u$ if $v$ is not a descendent of $u$.

Let $\mathcal{P}$ be a joint probability distribution of the random variables in $V$ and $G=(V,E)$ be a DAG. We say the two together form a \textit{Bayesian network} $<G, \mathcal{P}>$ if it satisfies the Markov condition.
\begin{defn}
\label{def:mb}
Let $<G=(V,E),\mathcal{P}>$ be a Bayesian network. The \textbf{Markov
  blanket} of $u \in V$ in the Bayesian network, denoted by $B(u)$, is
the minimum subset of variables satisfying
$u \!\perp\!\!\!\perp_{\mathcal{P}} v \mid B(u)$ for each
$v \in V\setminus B[u]$, where $B[u]=B(u)\cup \{u\}$.
\end{defn}

\begin{defn}
\label{def:moral_g}
The \textbf{moral graph} of a directed acyclic graph $G=(V,E)$ is the skeleton of the hybrid graph $H=(V,E\cup F)$, where $F=\{uv \mid u,v \in P_G(x), \{\overrightarrow{uv}, \overrightarrow{vu}\}\cap E = \emptyset, \forall x\in V\}$. 
\end{defn}
The above definition implicitly states a trivial \textit{moralization} process that turns a DAG into a moral graph. That is, by joining all pairs of non-adjacent parents in the DAG, then dropping all the directions. We call $F$ the set of \textit{filled-edges}. 

\begin{exmp}
Figure \ref{fg:envelope} shows a DAG and its moral graph that is obtained by joining $v_3$ and $v_4$ then dropping all the directions in the hybrid graph. 
\label{ex:moral_graph}
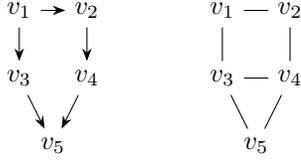
\begin{figure}
\centering
\begin{tikzpicture}[scale=0.9]
\begin{scope}
    \node (A) at (0,0) {$v_1$};
    \node (B) at (1,0) {$v_2$};
    \node (C) at (0,-1) {$v_3$};
    \node (D) at (1,-1) {$v_4$};
    \node (E) at (0.5,-2) {$v_5$};
    
    \node (F) at (3,0) {$v_1$};
    \node (G) at (4,0) {$v_2$};
    \node (H) at (3,-1) {$v_3$};
    \node (I) at (4,-1) {$v_4$};
    \node (J) at (3.5,-2) {$v_5$};
\end{scope}

\begin{scope}[>={Stealth[black]},
              every node/.style={fill=white,circle},
              every edge/.style={draw=black}]
    \path [->] (A) edge (B);
    \path [->] (A) edge (C);
    \path [->] (B) edge (D);
    \path [->] (C) edge (E);
    \path [->] (D) edge (E);
    
    \path [-] (F) edge (G);
    \path [-] (F) edge (H);
    \path [-] (G) edge (I);
    \path [-] (H) edge (I);
    \path [-] (H) edge (J);
    \path [-] (I) edge (J);
\end{scope}
\end{tikzpicture}
\caption{A DAG $G$ (left) and its moral graph $H$ (right), in which $v_3v_4$ is a filled-edge.}
\label{fg:envelope}
\end{figure}
\end{exmp}

For any spouse $v$ of $u$ that is neither a parent nor child of $u$, the two vertices $u,v$ must be connected in order to produce the moral graph of $G$. Hence, for each vertex $u \in V$, its Markov blanket in the DAG is identical to its neighbours in the moral graph. For example, in Figure \ref{fg:envelope} $B_G(v_3)=\{v_1,v_5,v_4\}=N_H(v_3)$.

For simplicity, if $V' \subset V$ then we use $G-V'$ to denote the \textit{induced subgraph} $G[V\setminus V']$ over the nodes in $V\setminus V'$. If $V'=\{u\}$, then we use $G-u$. If $V'=V(H)$, then we use $G-H$. Similarly, if $E' \subset E$ then we use $G+E'$ and $G-E'$ to denote $(V,E \cup E')$  and $(V,E\setminus E')$ respectively. If $E' = \{uv\}$ then we use $G+uv$ or $G-uv$ instead.

It is also useful to define the \textit{closed neighbours} of $u$ in $G$ as $N_G[u]=N_G(u)\cup \{u\}$ and the neighbours of a subgraph $H \subset G$ as $N_G(H)=\{u \in V\setminus V(H) \mid uv \in E, \forall v \in V(H)\}$.

\begin{defn}
A \textbf{simplicial node} in a graph is a node whose neighbours form a complete subgraph (a.k.a., clique). 
\end{defn}

\begin{defn}
Let $G=(V,E)$ be a graph. The \textbf{deficiency} of a node $x$ in $G$ is $D(x)=\{uv \notin E \mid u, v \in N(x)\}$.
\end{defn}
A node $u$ is simplicial in $G$ if and only if $D(u)=\emptyset$. That
is, no edge needs to be filled in to make the neighbours of $u$ a
clique. For all $u \in V$ if $D_G(u)\neq \emptyset$, then we write $D(G)\neq \emptyset$. If $\exists u \in V$ s.t. $D(u)=\emptyset$, then we write $D(G)=\emptyset$ .
\begin{exmp}
In the moral graph $H$ as shown in Figure \ref{fg:envelope}, $D_H(v_1)=\{v_2v_3\}$ and $D_H(v_5)=\emptyset$. 
\end{exmp}

A chordal graph $G=(V,E)$ is also known to be \textit{recursively
  simplicial}. That is, there exists a simplicial node $x$ s.t. the
induced subgraph $G-x$ is also recursively simplicial. Next, we
introduce a similar concept, but which requires indefinite edge removal in
addition to deleting a simplicial node.

\begin{defn}
\label{def:wrs}
A graph $G=(V,E)$ is \textbf{weakly recursively simplicial} if
$\exists x \in V$ with $D_G(x)=\emptyset$ and
$\exists E'\subseteq E(G[N(x)])$ s.t. the subgraph $G'=G-x-E'$ is
weakly recursively simplicial.
\end{defn}

\begin{exmp}
\begin{figure}
\centering
\begin{tikzpicture}[scale=0.9]
%\tikzstyle{place}=[circle,draw=blue!50,fill=blue!20,thick,inner sep=0pt,minimum size=6mm]
\begin{scope}
    \node (A) at (0,0) {$v_1$};
    \node (B) at (1,0) {$v_2$};
    \node (C) at (0,-1) {$v_3$};
    \node (D) at (1,-1) {$v_4$};
    \node (E) at (0.5,-2) {$v_5$};
    
    \node (F) at (3,0) {$v_1$};
    \node (G) at (4,0) {$v_2$};
    \node (H) at (3,-1) {$v_3$};
    \node (I) at (4,-1) {$v_4$};
    \node (J) at (3.5,-2) {$v_5$};
\end{scope}

\begin{scope}[>={Stealth[black]},
              every node/.style={fill=white,circle},
              every edge/.style={draw=black}]
    \path [-] (A) edge (B);
    \path [-] (A) edge (C);
    \path [-] (B) edge (D);
    \path [-] (C) edge (D);
    \path [-] (C) edge (E);
    
    \path [-] (F) edge (G);
    \path [-] (F) edge (H);
    \path [-] (G) edge (I);
    \path [-] (H) edge (I);
    \path [-] (H) edge (J);
    \path [-] (I) edge (J);
%    \path [-] (B) edge [bend right=60] (E); 
\end{scope}
\end{tikzpicture}
\caption{An example of a non-weakly recursively simplicial graph $G$ (left) and a weakly recursively simplicial graph $H$ (right).}
\label{fg:wrs}
\end{figure}
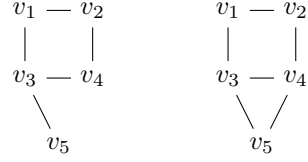

In Figure \ref{fg:wrs}, $H$ is a weakly recursively simplicial (WRS)
graph, because it can be turned into the empty graph by recursively
eliminating $\{v_5,v_3v_4\}$, $\{v_3\}$, $\{v_4\}$, $\{v_1\}$,
$\{v_2\}$, where each node is simplicial in the sequence of
subgraphs. The graph $G$, however, is not WRS because there is no such
sequence.
\end{exmp}

If a graph is recursively simplicial (i.e., chordal), it is also weakly recursively simplicial with $E'=\emptyset$ at each recursive step. The converse, however, is not true. For example, the graph $H$ in Figure \ref{fg:wrs} is WRS but not chordal. To further explore this recursive definition, we introduce the following concepts.

\begin{defn}
An \textbf{ordering} of a graph $G=(V,E)$ with $n$ vertices is a bijection $\alpha: \{1, \dots, n\} \leftrightarrow V$. 
\end{defn}
For simplicity, we use $\alpha=\{v_1,\dots,v_n\}$ to denote the ordering $\alpha$ s.t. $\alpha(i)=v_i$ for $i \in [1,n]$.

\begin{defn}
A set of \textbf{excesses} of a graph $G=(V,E)$ w.r.t. an ordering $\alpha$ is a bijection $\epsilon_{\alpha}: \{\alpha(1),\dots,\alpha(n)\} \leftrightarrow \{\epsilon_{\alpha}(\alpha(1)), \dots, \epsilon_{\alpha}(\alpha(n))\}$, where each $\epsilon_{\alpha}(\alpha(i)) \subseteq E(G[N(\alpha(i))])$ consists of some edges between the neighbours of $\alpha(i)$.
\end{defn}
The composition $\kappa=(\alpha,\epsilon_{\alpha})$ of an ordering and a set of excesses (w.r.t. $\alpha$) is called an \textit{elimination kit} of a graph $G$. We use the convention $\kappa(0)=\emptyset$ and let $\kappa(i)=\{\alpha(i), \epsilon_{\alpha}(\alpha(i))\}$ be the $i^{th}$ elimination kit. Hence, we can define the \textbf{subgraph, not yet elimination graph} \textit{eliminated graph} $G^i=G-\{\kappa(0),\dots,\kappa(i)\}$ for $i \in [0,n]$. 
%\textcolor{red}{An issue here is that the excess of 2 may include some edges that already been removed. So need to mentioned that if edge has been removed in previous steps, just ignore it.}
\begin{exmp}
\label{ex:ek}
An ordering $\alpha=\{v_5,v_3,v_4,v_1,v_2\}$ and a set of excesses $\epsilon_{\alpha}=\{\emptyset,\emptyset,\emptyset,\emptyset,\emptyset\}$ form an elimination kit of $H$ in Figure \ref{fg:wrs}. 
\end{exmp}

\begin{defn}
Let $G=(V,E)$ be a graph and $\kappa=(\alpha, \epsilon_{\alpha})$ be an elimination kit of $G$. Then $\kappa$ is a \textbf{perfect elimination kit} (PEK) of $G$ if each node $x \in V$ satisfies $D_{G^{\alpha^{-1}(x)-1}}(x)=\emptyset$.
\end{defn}

\begin{exmp}
\label{ex:pek}
The elimination kit in Example \ref{ex:ek} is not perfect, because $D_{H^1}(v_3)\neq \emptyset$. The only PEK for $H$ is when $\alpha=\{v_5,v_3,v_4,v_1,v_2\}$ and $\epsilon_{\alpha}=\{\{v_3v_4\},\emptyset,\emptyset,\emptyset,\emptyset\}$. 
\end{exmp}
Not all graphs have a PEK and some have more than one. In the next section, we prove that having a PEK is equivalent to being moral. According to the PEK in Example \ref{ex:pek}, the node $v_3$ is simplicial in the eliminated graph $H^1$ but not in $H$, so we say $v_3$ is a \textit{locally simplicial} node. Similarly, $v_4,v_1$ and $v_2$ are also locally simplicial. 
%\textcolor{red}{If $\kappa$ is a PEK, then the subgraph $G^i$ w.r.t. $\kappa$ is called an \textit{elimination graph}. This is to keep the consistency w/ previous work on elimination graph w.r.t. a PEO that is defined based on the elimination game alg.}

\begin{defn}
Let $G=(V,E)$ be a graph and $\kappa=(\alpha, \epsilon_{\alpha})$ be
an elimination kit of $G$. It is a \textbf{partial perfect elimination
  kit}  if there exists a non-empty eliminated graph $G^i \subset G$ s.t. $D(G^i)\neq \emptyset$ and $D_{G^{j-1}}(\alpha(j))=\emptyset$ for $j \in [1,i]$.
\end{defn}
A 4-cycle has no partial PEK, because it has no simplicial node. A graph that has a PEK may also have a partial PEK.
\begin{exmp}
Example \ref{ex:ek} is a partial PEK, because $D(G^1)\neq \emptyset$ and $D_{G^0}(v_5)=\emptyset$. 
\end{exmp}

\section{Morality, weak recursive simpliciality and perfect elimination kits}
\label{sec:wrs}
In this section, we prove the equivalence of some properties to morality. We first show that there is a one-to-one correspondence between being WRS and having a PEK. 

\begin{theorem}
\label{thm:wrs_has_pek}
A graph is weakly recursively simplicial if and only if it has a perfect elimination kit. 
\end{theorem}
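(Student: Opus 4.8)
The plan is to prove both implications by induction on $n=|V|$, exploiting the fact that both notions are recursive with the \emph{same} base case (the empty graph, reached after deleting everything) and the \emph{same} one-step reduction: delete a simplicial node together with a chosen set of edges among its neighbours. The bridge is the observation that an elimination kit is exactly the record produced by unwinding the WRS recursion --- the ordering $\alpha$ lists the simplicial nodes in the order they are removed, and the excess $\epsilon_{\alpha}(\alpha(i))$ is precisely the edge set $E'$ discarded at the $i$-th step. The technical glue throughout is the commuting-deletion identity $G^i=(G-x-E')^{\,i-1}$ for $i\ge 1$, where $x=\alpha(1)$ and $E'=\epsilon_{\alpha}(\alpha(1))$; this holds because vertex and edge deletions commute, and it lets me align the simpliciality conditions index-by-index across the induction.

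For the forward direction (WRS $\Rightarrow$ PEK), I would take a WRS graph $G$ and extract from Definition \ref{def:wrs} a simplicial node $x$ with $D_G(x)=\emptyset$ and a set $E'\subseteq E(G[N(x)])$ such that $G-x-E'$ is again WRS. By the induction hypothesis $G-x-E'$ has a PEK $\kappa'=(\alpha',\epsilon_{\alpha'})$. I would build $\kappa$ on $G$ by putting $\alpha(1)=x$, $\epsilon_{\alpha}(x)=E'$, and $\alpha(i)=\alpha'(i-1)$, $\epsilon_{\alpha}(\alpha(i))=\epsilon_{\alpha'}(\alpha'(i-1))$ for $i\ge 2$. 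Using the index-shift identity, the PEK requirement $D_{G^{i-1}}(\alpha(i))=\emptyset$ reduces, for $i=1$, to simpliciality of $x$ in $G^0=G$ (given), and, for $i\ge 2$, to the already-established PEK condition of $\kappa'$ in $G-x-E'$. Hence $\kappa$ is a PEK of $G$.

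For the reverse direction (PEK $\Rightarrow$ WRS) I would run the argument backwards: given a PEK $\kappa$ of $G$, its condition at $i=1$ says $D_{G^0}(\alpha(1))=D_G(\alpha(1))=\emptyset$, so $x:=\alpha(1)$ is simplicial in $G$ and $G^1=G-x-\epsilon_{\alpha}(x)$. The index-shift identity shows that the tail $(\alpha(2),\dots,\alpha(n))$ with the corresponding excesses is a PEK of $G^1$, so by induction $G^1=G-x-\epsilon_{\alpha}(x)$ is WRS; since $x$ is simplicial in $G$, this is exactly the witness demanded by Definition \ref{def:wrs}, giving that $G$ is WRS. The one genuinely delicate point, which I would isolate as a small lemma, is that the excess sets stay \emph{legal} after the shift: an excess at step $i$ must consist of edges among the neighbours of $\alpha(i)$ \emph{in the graph where it is eliminated}, so I must check that the neighbourhood of $\alpha(i)$ seen at step $i$ in $G$ coincides with the one seen at step $i-1$ in $G-x-E'$. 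This again follows from $G^i=(G-x-E')^{\,i-1}$, but it is what certifies that the transported data is simultaneously a valid elimination kit on both sides; the remaining base-case check ($n=0$) is routine.
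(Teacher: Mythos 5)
Your proposal is correct and follows essentially the same route as the paper: the paper's (much terser) proof likewise observes that unwinding the WRS recursion yields exactly an ordering together with excesses satisfying the PEK condition, and that conversely a PEK prescribes a valid recursive elimination. Your version merely formalizes this correspondence as an induction on $|V|$ with the explicit index-shift identity $G^i=(G-x-E')^{\,i-1}$, which supplies the rigour (including the legality of the transported excesses) that the paper leaves implicit.
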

\begin{proof}
 If $G=(V,E)$ is WRS, the simplicial node $x$ and the edges
$E' \subset E(G[N_G(x)])$ removed at each step of the recursion form
an ordering and a set of excesses, because the $x$ at each step of
the recursion is locally simplicial. Hence, $G$ has a PEK. The
converse is also true because if $G$ has a PEK, it can be eliminated
recursively by following the PEK to get to the empty graph.
\end{proof}

Next, we show the equivalence between moral graphs and WRS graphs. This is proved by the following two lemmas. 
\begin{lemma}
\label{lm:moral_implies_wrs}
Let $G=(V,E)$ be a DAG and $H$ be the moral graph of $G$. Then $H$ is weakly recursively simplicial. 
\end{lemma}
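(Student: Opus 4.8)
The plan is to induct on the number of vertices $|V|$, exploiting the fact that moralization is ``local'' to each vertex's parents. The base case $|V|=1$ is immediate: the moral graph of a single-vertex DAG is an isolated vertex, which is trivially WRS. For the inductive step I would choose $x$ to be a \emph{sink} of $G$, i.e. a vertex with no children; such an $x$ exists because $G$ is acyclic. The first claim is that $x$ is simplicial in $H$, so that $D_H(x)=\emptyset$. Indeed, since $x$ is the parent of no vertex it appears in no filled-edge, and having no children it has no co-parents either; hence $N_H(x)=P_G(x)$. Moreover, by Definition \ref{def:moral_g} every pair of parents of $x$ is either already adjacent in $G$ or joined by a filled-edge, so $P_G(x)$ induces a clique in $H$ and $x$ is simplicial.

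Next I would relate $H-x$ to the moral graph of the smaller DAG $G'=G-x$. Write $H'$ for the moral graph of $G'$; by the induction hypothesis $H'$ is WRS. The core of the proof is to show $H' = H - x - E'$ for some $E' \subseteq E(H[N_H(x)])$, which, together with the simpliciality of $x$, yields WRS of $H$ directly from Definition \ref{def:wrs}. To establish this I would track the two kinds of edges separately. Because $x$ is a sink, the only edges of $G$ incident to $x$ are the arcs $\overrightarrow{px}$ from its parents, so the skeleton of $G'$ is exactly the skeleton of $G$ with the edges at $x$ deleted; and since $x$ lies in no filled-edge, \emph{every} filled-edge of $H$ survives in $H-x$.

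The hard part is characterizing precisely which filled-edges of $H$ fail to be filled-edges of $H'$, and checking that they all lie inside $N_H(x)$. A pair $\{u,v\}$ is joined by a filled-edge exactly when $u,v$ are non-adjacent co-parents of some common child. Removing $x$ can neither create adjacencies nor destroy any common child other than $x$ itself, so the filled-edges present in $H$ but absent from $H'$ are precisely those pairs whose \emph{unique} common child was $x$; every such pair consists of two parents of $x$ and hence lies in $P_G(x)=N_H(x)$. Taking $E'$ to be exactly this set gives $E' \subseteq E(H[N_H(x)])$ and $H - x - E' = H'$. Since $H'$ is WRS by induction and $D_H(x)=\emptyset$, Definition \ref{def:wrs} then yields that $H$ is WRS. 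I expect the only real care to be in this last step: verifying that discarding the ``no longer needed'' marriage edges removes neither a skeleton edge nor a marriage edge that some other common child still forces, and noting that possible disconnection of $G-x$ does no harm, since a sink, and hence the entire recursion down to the empty graph, still exists within each component.
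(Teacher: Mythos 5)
Your proof is correct and takes essentially the same route as the paper's: induction on the number of vertices, observing that a sink $x$ of $G$ becomes simplicial in the moral graph $H$, and that deleting $x$ recovers the moral graph of $G-x$ so the inductive hypothesis applies. If anything, your write-up is more careful than the paper's, which merely asserts that the moral graph of $G-x$ sits inside $H$ without verifying the key point you establish explicitly — that the filled-edges lost by removing $x$ are exactly those whose unique common child was $x$, and hence form a legitimate excess $E'\subseteq E(H[N_H(x)])$.
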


\begin{proof}
The lemma is proved by induction on the number of nodes. Let $G(n)$
and $H(n)$ denote, respectively, a DAG and its moral graph over a set of $n$ nodes. The lemma is true for $n \le 3$, because all graphs containing three nodes or less are WRS. Assuming $H(n)$ is WRS for $n \ge 3$. We want to show that the moral graph $H(n+1)$ of DAG $G(n+1)$ is also WRS. Each DAG contains a sink and it becomes simplicial in the DAG's moral graph, because its parents form a clique after moralization. Hence, $H(n+1)$ contains a simplicial node $x$. By removing $x$ from the DAG we obtain a subgraph $G(n)$ that is also a DAG and its moral graph $H(n) \subset H(n+1)$. The inductive hypothesis assumes that each moral graph $H(n)$ is WRS. Hence, $H(n+1)$ is also WRS. 
\end{proof}

\begin{lemma}
\label{lm:wrs_implies_moral}
Let $H=(V,E)$ be a weakly recursively simplicial graph. Then $H$ is the moral graph of a DAG. 
\end{lemma}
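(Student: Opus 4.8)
The plan is to prove this converse by induction on the number of nodes, mirroring the inductive structure of Lemma \ref{lm:moral_implies_wrs} but running the construction in the opposite direction: where moralization of a DAG turns each sink into a simplicial node, I will turn each eliminated (locally) simplicial node back into a sink of a DAG that I build from the bottom up. The base case is immediate, since a single node -- or any graph on at most three nodes, as in Lemma \ref{lm:moral_implies_wrs} -- is trivially the moral graph of some DAG.

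For the inductive step, suppose the claim holds for all WRS graphs on fewer than $n$ nodes, and let $H=(V,E)$ be WRS with $|V|=n$. By Definition \ref{def:wrs} there is a node $x$ with $D_H(x)=\emptyset$ and a set $E'\subseteq E(H[N(x)])$ such that $H'=H-x-E'$ is WRS. Since $H'$ has $n-1$ nodes, the induction hypothesis supplies a DAG $G'$ on $V\setminus\{x\}$ whose moral graph is $H'$. I then extend $G'$ to a DAG $G$ on $V$ by adding $x$ as a \emph{sink}, orienting an arc $\overrightarrow{ux}$ for every $u\in N_H(x)$; because $x$ has no out-arcs, $G$ remains acyclic, and $P_G(x)=N_H(x)$.

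The heart of the argument is to check that the moral graph of $G$ is exactly $H$. Adding the sink $x$ introduces no new parent relationships among the nodes of $V\setminus\{x\}$ (indeed $G-x=G'$), so the only fill-in edges that moralization of $G$ can create beyond those already present in $H'$ are edges between co-parents of $x$, i.e.\ a clique on $N_H(x)$; and $x$ itself ends up adjacent to precisely $N_H(x)$. Because $x$ is simplicial in $H$, the set $N_H(x)$ is already a clique in $H$, so $E(H[N(x)])$ is the complete edge set on $N_H(x)$, and $H'$ contains all of these except the removed excess $E'$. Superimposing the clique on $N_H(x)$ onto $H'$ therefore restores exactly the edges of $E'$, giving $H'+E'=H-x$ on $V\setminus\{x\}$; together with $x$ joined to $N_H(x)$ this reconstructs $H$. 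This exposes the clean correspondence driving the lemma: the excess edges deleted during the WRS recursion are precisely the moralization fill-in edges of the sink.

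I expect the main obstacle to be the bookkeeping in this last step -- namely arguing that moralizing $G$ creates no spurious fill-in edges elsewhere (which follows from $x$ being a sink, so no other node gains a co-parent) and that the clique on $N_H(x)$ fills back exactly $E'$ and nothing more (which uses $N_H(x)$ being a clique in $H$). A secondary point worth a remark is connectivity: since a simplicial node is never a cut vertex, $H-x$ stays connected, and the WRS recursion only admits excess sets $E'$ for which $H'$ remains a connected WRS graph, so the induction hypothesis applies cleanly.
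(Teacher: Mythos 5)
Your proof is correct and follows essentially the same route as the paper's: induction on the number of nodes, invoking the WRS definition to obtain the simplicial node $x$ and excess $E'$, applying the inductive hypothesis to $H-x-E'$, and re-attaching $x$ as a sink whose parents are $N_H(x)$. In fact your write-up is more careful than the paper's, which asserts without verification that the resulting DAG's moral graph is $H$; your clique-restoration argument (moralizing the sink fills back exactly $E'$ because $N_H(x)$ is a clique in $H$) supplies precisely the missing bookkeeping.
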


\begin{proof}
  The lemma is proved by induction on the number of nodes $n$. The
  statement is true for $n=1$, because a single node graph $H(1)$ is
  both the moral graph of $G(1)$ and a WRS graph. Assume
$H(n)$ with $n\ge 1$ is WRS hence the moral graph of a DAG $G(n)$, we want to show that a WRS graph $H(n+1)$ is the moral graph of a DAG $G(n+1)$. By definition, $H(n+1)$ has a simplicial node $x$ and its excess $\epsilon(x)$ s.t. $H(n+1)-x-\epsilon(x)$ is WRS. By the inductive assumption, $H(n)$ is the moral graph of a DAG $G(n)$. Hence, by adding $x$ to $G(n)$ as a sink, we obtain a DAG $G(n+1)$, whose moral graph is $H(n+1)$.
\end{proof}

\begin{theorem}
\label{thm:wrs_equal_moral}
A graph is weakly recursively simplicial if and only if it is the moral graph of a DAG. 
\end{theorem}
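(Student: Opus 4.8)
The plan is to observe that this biconditional is exactly the conjunction of the two lemmas just established, so the proof reduces to assembling them. For the forward direction, suppose $G$ is weakly recursively simplicial; then Lemma \ref{lm:wrs_implies_moral} furnishes a DAG whose moral graph is $G$, which is what we want. For the reverse direction, suppose $G$ is the moral graph of some DAG; then Lemma \ref{lm:moral_implies_wrs} gives that $G$ is weakly recursively simplicial. Chaining these two implications yields the equivalence, so the body of the argument is essentially a single sentence invoking both lemmas.

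Since all the real work is pushed into the lemmas, the one thing I would double-check is that the two statements are genuine converses over the same class of objects, i.e.\ that ``moral graph of a DAG'' means the same thing in both and that neither lemma silently restricts to a proper sub-class (for instance, both should range over the same family of DAGs rather than, say, connected graphs in one case and arbitrary ones in the other). Under the standing convention of Section \ref{sec:pre} that all graphs are connected, this alignment is automatic, so no extra bookkeeping is needed here.

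If I instead had to prove the theorem from scratch without the lemmas, the main obstacle would be the inductive reconstruction inside Lemma \ref{lm:wrs_implies_moral}: after peeling off a simplicial node $x$ together with its excess $\epsilon(x)$ and applying the inductive hypothesis to the WRS graph $H(n) = H(n+1) - x - \epsilon(x)$, one reattaches $x$ as a sink of the reconstructed DAG $G(n)$ with parent set $N_{H(n+1)}(x)$. The delicate point is to verify that moralizing $G(n+1)$ reproduces $H(n+1)$ \emph{exactly}: the fill-in edges added among the parents of the sink $x$ must coincide with the previously deleted excess $\epsilon(x)$, neither more nor fewer. This is the one place where the definitions of excess and deficiency must line up precisely, and where the equivalence could fail if the recursion were set up carelessly; the clean biconditional here is really a certificate that those definitions were chosen correctly.
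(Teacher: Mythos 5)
Your proposal is correct and matches the paper's proof exactly: the paper likewise dispatches the theorem in one line by citing Lemma \ref{lm:moral_implies_wrs} for the reverse direction and Lemma \ref{lm:wrs_implies_moral} for the forward direction. Your additional remarks about the lemmas ranging over the same class of (connected) graphs and about the fill-in/excess alignment inside Lemma \ref{lm:wrs_implies_moral} are sensible sanity checks but do not change the argument.
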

\begin{proof}
The theorem follows from Lemma \ref{lm:moral_implies_wrs} and Lemma \ref{lm:wrs_implies_moral}. 
\end{proof}

The next lemma states that a moral graph can be eliminated by starting from any simplicial node. 
\begin{lemma}
\label{lm:wrs_start_any_sim}
If $H=(V,E)$ is moral and $x \in V$ is any vertex with $D_H(x)=\emptyset$, there is a perfect elimination kit $\kappa=(\alpha,\epsilon_{\alpha})$ with $\alpha(1)=x$. 
\end{lemma}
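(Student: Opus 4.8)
The plan is to peel off $x$ as the very first elimination step, using a carefully chosen excess, and then hand the rest of the graph to the equivalences already proved. Since $H$ is moral, by Definition~\ref{def:moral_g} there is a DAG $G$ whose moral graph is $H$, and because $x$ is simplicial the neighbourhood $N_H(x)=B_G(x)$ is a clique in $H$. This clique property is the lever that lets us reshape $G$ around $x$ without disturbing the moral graph.

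First I would reorient and augment $G$ into a DAG $G^{*}$ on the same vertices, with the same moral graph $H$, in which $x$ is a sink whose parent set is exactly $N_H(x)$. Concretely, for each $w\in N_H(x)$ I install the arc $\overrightarrow{wx}$: reversing it if $G$ had $\overrightarrow{xw}$, keeping it if $G$ already had $\overrightarrow{wx}$, and adding it fresh if $w$ was only a spouse of $x$ (so that $wx$ was a filled-edge of $H$ rather than an arc of $G$). Arcs not incident to $x$ are left untouched. Acyclicity is then immediate, since $x$ becomes a sink, no directed cycle can pass through it, and any cycle avoiding $x$ would already be a cycle of $G$.

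The hard part is checking that this surgery preserves the moral graph, and this is where the clique property does all the work. Making $N_H(x)$ the parent set of $x$ forces all its pairs to be joined on moralization, but they are already pairwise adjacent in $H$, so moralizing $x$ introduces no edge outside $H$. Dually, although $x$ ceases to be a parent of its former children, so the filled-edges that tied $x$ to its spouses through a common child vanish, each such spouse is now a parent of $x$ and hence stays adjacent to $x$; and every co-parent relation whose common child is not $x$ survives. A short case split over the three kinds of edges of $H$ (skeleton arcs, filled-edges with common child $x$, and filled-edges with a common child other than $x$) then confirms that the moral graph of $G^{*}$ is exactly $H$.

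Finally, deleting the sink $x$ from $G^{*}$ yields a DAG $G^{*}-x$ whose moral graph is $H-x-E'$, where $E'$ is precisely the set of filled-edges among $N_H(x)$ that disappear once the common child $x$ is gone; in particular $E'\subseteq E(H[N_H(x)])$. Thus $H-x-E'$ is moral, so by Theorem~\ref{thm:wrs_equal_moral} it is weakly recursively simplicial and by Theorem~\ref{thm:wrs_has_pek} it has a perfect elimination kit $\kappa'$. Prepending to $\kappa'$ the step that removes $x$ together with $E'$ gives a kit $\kappa$ of $H$ with $\alpha(1)=x$: the first deficiency condition $D_H(x)=\emptyset$ holds by hypothesis, the excess is admissible because $E'\subseteq E(H[N_H(x)])$, and the graph left after this step is exactly $H-x-E'=H^{1}$, so all remaining deficiency conditions are inherited from $\kappa'$. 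Hence $\kappa$ is the desired perfect elimination kit beginning at $x$.
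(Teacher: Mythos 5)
Your proof is correct, and it follows the same overall recursion as the paper's own proof: realize $x$ as a sink of a DAG whose moral graph is $H$, delete it to obtain a smaller moral graph $H-x-E'$ with $E'\subseteq E(H[N_H(x)])$, invoke Theorems \ref{thm:wrs_has_pek} and \ref{thm:wrs_equal_moral} to obtain a perfect elimination kit of that subgraph, and prepend $(x,E')$. The substantive difference is that you prove the one step the paper takes for granted. The paper's proof fixes a DAG $G$ realizing $H$ and then argues ``for any sink $x$ in $G$''; read literally, that establishes the claim only for vertices that happen to be sinks of the chosen $G$, whereas the lemma asserts it for an \emph{arbitrary} vertex with $D_H(x)=\emptyset$ --- for instance, in the two-vertex DAG with the single arc $\overrightarrow{v_1v_2}$, the vertex $v_1$ is simplicial in the moral graph but is not a sink of $G$. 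Your reorientation-and-augmentation construction of $G^{*}$ --- pointing every edge of $G$ incident to $x$ into $x$, adding arcs from the spouses, and using the cliqueness of $N_H(x)$ to verify that moralization reproduces exactly $H$ --- is precisely the missing justification that every simplicial vertex of a moral graph is a sink of \emph{some} DAG realizing it, after which the paper's argument goes through verbatim. So your write-up is not merely correct; it closes a genuine gap in the paper's own proof, at the cost of the longer case analysis confirming that the surgery on $G$ preserves the moral graph.
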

\begin{proof}
Let $G=(V,F)$ be a DAG, whose moral graph is $H$. For any sink $x$ in $G$, the subgraph $G'=G-x$ is also a DAG. Let $H'$ be the moral graph of $G'$, so $H'=H-x-f$ for $f \subset E(G[N_H(x)])$. By Theorem \ref{thm:wrs_has_pek} and Theorem \ref{thm:wrs_equal_moral}, $H'$ has a PEK $\kappa'=(\beta,\epsilon_{\beta})$. Hence, adding $x$ and $f$ to the front of $\beta$ and $\epsilon_{\beta}$ results in a PEK $\kappa=(\alpha,\epsilon_{\alpha})$ of $H$ s.t. $\alpha(1)=x$.
\end{proof}

\iffalse
\citet{verma1993deciding} made the following two remarks that could increase the speed of Algorithm \ref{alg:wrs_bktr}. 
\begin{remark}
If a graph is moral, it has at least one simplicial node. 
\end{remark}

\begin{remark}
If a graph is moral, each cycle in it shares an edge with a k-clique. 
\end{remark} 

Next, we present a backtracking algorithm for checking whether or not a given graph $G=(V,E)$ is WRS. If it is, the algorithm will return T and orient it into a hybrid graph that always has a consistent DAG extension \cite{dor1992simple}, whose Markov blankets are identical to $G$'s.
\begin{algorithm}[tb]
	\caption{Checking morality using backtracking}
	\label{alg:wrs_bktr}
\begin{algorithmic}
	\STATE {\bfseries Input:} graph $G=(V,E)$
    \FUNCTION{$\phi(G)$}
    	\IF{$G=\emptyset$}     
    		\STATE{\bfseries return}{ T} 
    	\ENDIF
    	\IF{$D(G)=\emptyset$}
    		\STATE{$H = G$} %\cmt{cache $G$}
    		\FOR{$x \in V(G)$ s.t. $D_G(x)=\emptyset$} %\cmt{for each simplicial node}
    			\STATE{$E'=\{uv \in E(G) \mid u,v\in N_G(x)\}$}
    			\FOR{$\epsilon(x) \subseteq E'$} %\cmt{for each excess}
    				\STATE {$G=H$} %\cmt{restore $G$}
    				\STATE {$G=G-x-\epsilon(x)$}
        			\IF{$\phi(G)$}         		
        				\STATE{\bfseries return}{ T} %\cmt{apply recursion}
        			\ENDIF
      			\ENDFOR
      		\ENDFOR
    	\ENDIF
    	\STATE{\bfseries return}{ F} %\cmt{return F if no simplicial node}
    \ENDFUNCTION
\end{algorithmic}
\end{algorithm}
\fi

\section{Complexity}
\label{sec:complexity}
\citet{verma1993deciding} proved that deciding morality for an
arbitrary graph is NP-complete. This is not only because the number of edges
between a simplicial node's neighbours is exponential in its degree,
but also because the deletion of some edges can stop a node being simplicial in
any following recusive step, which cannot be anticipated at
the time of deletion. In this section, we look at restricted
graphs. In particular, graphs with limited maximum degree. We develop
polynomial time algorithms for maximum degree $3$ and $4$
graphs. Furthermore, we prove that the NP-completeness still hold for
maximum degree $5$ graphs by modifying the reduction from 3-CNFs to
graphs as shown in \cite{verma1993deciding}.

It is trivial to check morality for graphs with maximum degree less
than or equal to 2. To prove our results for maximum degree $3$ and $4$
graphs, we prove the following lemmas first. Some of these lemmas are proved by contradiction. Given a graph $G$ is moral, the general strategy is to assume a subgraph of interest $G'=G-x-F$ is not moral, which is obtained by removing a simplicial node $x$ and some edges $F \subseteq E(G[N_G(x)])$ from $G$. And show that if the assumption is true, then $\forall F' \subseteq E(G[N_G(x)])$ s.t. $F' \neq F$ the subgraph $G''=G-x-F'$ is not moral. This contradicts to the premise that $G$ is moral, so the subgraph of interest $G'$ must be moral. By Lemma \ref{lm:wrs_start_any_sim}, $x$ can be any simplicial node. 
\begin{lemma}
\label{lm:no_common_nbr}
If $G=(V,E)$ is not moral, then $H=G+uv$ is not moral for any pair of non-adjacent $u,v$ s.t. $N_G(u)\cap N_G(v)=\emptyset$. 
\end{lemma}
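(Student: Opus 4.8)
The plan is to prove the contrapositive: assuming $H=G+uv$ is moral, I would show that $G$ is moral. By Theorem~\ref{thm:wrs_equal_moral} together with Theorem~\ref{thm:wrs_has_pek}, it suffices to take a PEK $\kappa=(\alpha,\epsilon_\alpha)$ of $H$ and manufacture from it a PEK of $G$. The first thing I would record is that the no-common-neighbour hypothesis is inherited by $H$: since $N_H(u)=N_G(u)\cup\{v\}$ and $N_H(v)=N_G(v)\cup\{u\}$, a short set computation gives $N_H(u)\cap N_H(v)=\emptyset$, and the same holds in every eliminated graph $H^k$ because each is a subgraph of $H$. An immediate and useful consequence is that the edge $uv$ can never occur as an excess of any node $w$, since that would require $w$ to be adjacent to both $u$ and $v$; hence $uv$ is deleted only at the moment $u$ or $v$ is itself eliminated.

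The key structural step, and the one I expect to carry the argument, is to examine whichever of $u,v$ is eliminated first along $\alpha$; say it is $u$, at step $i$ (by the symmetry of the hypothesis this is without loss of generality). At that moment $v$ and the edge $uv$ are both still present, so $v\in N_{H^{i-1}}(u)$, while the PEK condition forces $D_{H^{i-1}}(u)=\emptyset$, i.e.\ $N_{H^{i-1}}(u)$ is a clique. If $u$ had any further neighbour $w$ at this stage, then $w$ would be forced adjacent to $v$ as well, making $w$ a common neighbour of $u$ and $v$ in $H^{i-1}\subseteq H$, which is impossible. Therefore $N_{H^{i-1}}(u)=\{v\}$: when $u$ is eliminated it is a leaf attached only to $v$, and its excess is necessarily empty.

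With this in hand I would transfer $\kappa$ to $G$ unchanged and verify it is a PEK by induction on the elimination step, maintaining the invariant that $H^k=G^k+uv$ for $k<i$ and $H^k=G^k$ for $k\ge i$. For each step $k<i$ the eliminated node $w=\alpha(k)$ differs from $u$ and $v$ and is not a common neighbour of them, so the single extra edge $uv$ lies outside $N(w)$; hence $w$ has the same empty deficiency in $G^{k-1}$ as in $H^{k-1}$, the same excess is legal, and the invariant is preserved. At step $i$ the node $u$, a leaf in $H^{i-1}$, becomes isolated in $G^{i-1}=H^{i-1}-uv$, so it is still simplicial with empty excess, and deleting it yields $G^i=H^i$; from there the two processes coincide verbatim. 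Concluding that $\kappa$ is a PEK of $G$ gives, via Theorems~\ref{thm:wrs_has_pek} and \ref{thm:wrs_equal_moral}, that $G$ is moral, completing the contrapositive. The only delicate points are the bookkeeping of which edges survive each elimination and the check that no excess ever equals $uv$; both are controlled by the preserved no-common-neighbour property, which is really the engine of the whole proof.
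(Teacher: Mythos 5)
Your proof is correct, and it takes a genuinely different route from the paper's. The paper argues in the forward direction by case analysis: assuming $G$ is not moral, it distinguishes the case where $G$ has no simplicial node at all from the case where every partial perfect elimination kit of $G$ gets stuck at some eliminated graph $G^i$, and in each case claims the added edge $uv$ cannot make any node simplicial, since a node whose deficiency is repaired by $uv$ (or by eliminating $u$ first) would have to be a common neighbour of $u$ and $v$. You instead prove the contrapositive constructively: from a PEK $\kappa$ of $H$ you manufacture a PEK of $G$. Your key structural observation --- not present in the paper --- is that in any PEK of $H$, whichever of $u,v$ is eliminated first (say $u$, at step $i$) must at that moment have neighbourhood exactly $\{v\}$, because its neighbourhood is a clique containing $v$ and any third member would be a common neighbour of $u$ and $v$ in $H$; combined with the fact that $uv$ can never occur in any excess, this yields the invariant $H^k = G^k + uv$ for $k < i$ and $H^k = G^k$ for $k \ge i$, which closes the induction. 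What your approach buys is rigour and transparency: the paper's Case 2 compresses the whole correspondence of elimination processes into two terse sentences whose quantification over orderings and excesses is left implicit, whereas your invariant makes every step checkable and localises exactly where the hypothesis $N_G(u)\cap N_G(v)=\emptyset$ enters (no vertex of $H$, hence of any $H^k$, is adjacent to both $u$ and $v$). What the paper's formulation buys is that it stays in the vocabulary of deficiencies and stuck partial kits, which it reuses in the subsequent degree-$3$ and degree-$4$ lemmas. One minor caveat: your claim that $u$'s excess is ``necessarily empty'' presumes excesses are read as edges among neighbours of the current eliminated graph; under the paper's literal definition (edges of the original graph among original-graph neighbours) the excess of $u$ need not be empty, but your transfer survives unchanged, because the no-common-neighbour condition gives $E(H[N_H(u)]) = E(G[N_G(u)])$, so the same excess remains legal for $G$.
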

\begin{proof}
$G$ is not moral implies the following two cases:

\textit{Case 1:} $D(G)\neq \emptyset$. The only possibility for
turning a node $x$ into a simplicial node in $H$ is when
$D_G(x)=uv$. This contradicts the premise
$N_G(u)\cap N_G(v)=\emptyset$. Hence, $D(H)\neq \emptyset$.

\textit{Case 2:} $D(G)=\emptyset$. Then $G$ has only partial PEKs,
each of which can lead to a subgraph $G^i \subset G$ s.t.
$D(G^i)\neq \emptyset$. To make a node $x \in G^i$ simplicial, either
$D_{G^i}(x)=uv$ for $u, v \in G^i$ or $u \notin G^i$ is a locally
simplicial node in $G$ s.t. $N_G(u) \cap V(G^i) = x$ and
$xv \in E(G^i)$. Both conditions, however, contradict to 
$N_G(u)\cap N_G(v)=\emptyset$. Hence, $D(H)\neq \emptyset$.
\end{proof}
The above lemma states that if a graph is not moral, adding an edge
between non-adjacent nodes who have no common neighbours will not
make it moral. The next lemma states that if $x$ is a simplicial node
s.t. no pair of its neighbours have a common neighbour outside of
$N_G[x]$, then morality is preserved after removing $x$ and all the edges
between its neighbours. 

\begin{lemma}
\label{lm:1kp}
Let $G=(V,E)$ be a moral graph. If $\exists x \in V$ with $D_G(x)=\emptyset$ s.t. for each pair $u, v \in N_G(x)$, their common neighbours $N_G(u)\cap N_G(v) \subset N_G[x]$, then $G'=G-x-E(G[N_G(x)])$ is moral.
\end{lemma}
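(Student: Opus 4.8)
The plan is to follow the general strategy stated just before the lemma and argue by contradiction, using Lemma~\ref{lm:no_common_nbr} as the engine and Lemma~\ref{lm:wrs_start_any_sim} to supply a witness. Write $P=N_G(x)$ and $W=V\setminus N_G[x]$. Since $G$ is moral and $D_G(x)=\emptyset$, Lemma~\ref{lm:wrs_start_any_sim} gives a perfect elimination kit of $G$ starting at $x$; its first step removes $x$ together with some excess $\epsilon(x)\subseteq E(G[P])$, and the remainder of the kit eliminates $G-x-\epsilon(x)$, so by Theorem~\ref{thm:wrs_equal_moral} the graph $G-x-\epsilon(x)$ is moral. If $\epsilon(x)=E(G[P])$ we are already done, so assume otherwise and suppose, for contradiction, that the target $G'=G-x-E(G[P])$ is not moral.

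The key computation is that in $G'$ the vertices of $P$ are pairwise non-adjacent and share no common neighbour. Deleting $x$ and all of $E(G[P])$ leaves $N_{G'}(u)=N_G(u)\setminus N_G[x]$ for every $u\in P$, so for distinct $u,v\in P$,
\[
N_{G'}(u)\cap N_{G'}(v)=\bigl(N_G(u)\cap N_G(v)\bigr)\setminus N_G[x]=\emptyset,
\]
the last equality being exactly the hypothesis $N_G(u)\cap N_G(v)\subseteq N_G[x]$. Equivalently, the ``external'' neighbourhoods $\{N_G(u)\setminus N_G[x]:u\in P\}$ are pairwise disjoint. Consequently each single edge of $E(G[P])$ that is missing from $G'$ joins two non-adjacent vertices with no common neighbour, so by Lemma~\ref{lm:no_common_nbr} adding any one of them back to $G'$ keeps the graph non-moral.

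The intended contradiction is then to add the missing edges $E(G[P])\setminus\epsilon(x)$ back to $G'$ one at a time until $G-x-\epsilon(x)$ is recovered, concluding that $G-x-\epsilon(x)$ is not moral and contradicting the first paragraph. The main obstacle is precisely this chaining: although every pair in $P$ begins with an empty common neighbourhood in $G'$, re-inserting clique edges can create a common neighbour \emph{inside} $P$, so Lemma~\ref{lm:no_common_nbr} is not applicable to the closing edge of a triangle among three vertices of $P$ (such triangles do occur, since $E(G[P])\setminus\epsilon(x)$ is the edge set induced on $P$ by a moral graph). I expect to resolve this by exploiting the disjointness above through the DAG picture rather than by naive chaining: realise $G$ as the moral graph of a DAG in which $x$ is a sink and $P$ is its set of parents, and use the disjointness of the external neighbourhoods of $P$ to detach the parents and exhibit a DAG whose moral graph is exactly $G'$, proving $G'$ moral directly. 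The delicate point of that construction is to destroy every edge inside $P$ while preserving all $P$--$W$ and $W$--$W$ edges, in particular those $P$--$W$ edges that in the original DAG arose only through a common child lying inside $P$.
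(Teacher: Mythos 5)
Your proposal is not a finished proof. Up through the disjointness computation it coincides with the paper's argument (contradiction, Lemma~\ref{lm:wrs_start_any_sim} to get a perfect elimination kit starting at $x$, then Lemma~\ref{lm:no_common_nbr}); but at the decisive moment you name an obstacle and substitute a promise (``realise $G$ as the moral graph of a DAG in which $x$ is a sink \dots and exhibit a DAG whose moral graph is exactly $G'$'') for an argument. Exhibiting such a DAG \emph{is} the content of the lemma, and you explicitly leave its ``delicate point'' unresolved, so what you have is a plan with an acknowledged hole.

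That said, the obstacle you refuse to step over is real, and it is exactly the step the paper's own proof takes silently. The paper assumes $G'$ is not moral and concludes from Lemma~\ref{lm:no_common_nbr} that $G-x-S$ is not moral for every non-empty proper subset $S\subsetneq E(G[N_G(x)])$, disposing of $S=\emptyset$ with an unsupported aside. Since $N_G(x)$ is a clique, whenever $d_G(x)\ge 3$ this chaining must re-insert all three edges of some triangle $u,v,w\in N_G(x)$; once $uv$ and $uw$ are back, $u$ is a common neighbour of $v$ and $w$, and Lemma~\ref{lm:no_common_nbr} cannot be applied to $vw$ --- your ``closing edge of a triangle'' objection verbatim. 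The chaining is sound only when $E(G[N_G(x)])$ contains no triangle, e.g.\ the single-edge case $d_G(x)=2$, which happens to be the only way the lemma is invoked later (Lemma~\ref{lm:wrs_deg3}, Corollary~\ref{cor:1k3}). Moreover, your ``delicate point'' dissolves under the lemma's hypothesis: if $c\in N_G(x)$ were a common child (in the DAG) of $u\in N_G(x)$ and of some $w\notin N_G[x]$, the moral edge $uw$ would make $w$ a common neighbour of the pair $u,c\in N_G(x)$ lying outside $N_G[x]$, a contradiction. Hence no vertex outside $N_G[x]$ is adjacent to two vertices of $N_G(x)$ at all, and your detachment idea can be pushed through (for instance by induction along a perfect elimination kit of $G$: the first eliminated node's clique neighbourhood meets $N_G[x]$ in at most one vertex unless it lies wholly inside $N_G[x]$, so the elimination transfers to $G'$). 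Carried out, that would be a correct proof, and a sounder route than the one the paper actually prints.
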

\begin{proof}
  Assume $G'$ is not moral. The removal of $E(G[N_G(x)])$ implies
  every pair of $x$'s neighbours $u,v$ are non-adjacent in $G'$. In
  addition, $N_G(u)\cap N_G(v) \subset N_G[x]$ implies
  $N_{G'}(u)\cap N_{G'}(v)=\emptyset$. By Lemma
  \ref{lm:no_common_nbr}, for any non-empty proper subset
  $S \subsetneq E(G[N_G[x]])$, the subgraph $G''=G-x-S$ is not
  moral. It is not difficult to see that $G-x$ is not moral either,
  for otherwise $G'$ must be moral too. Hence, for any ordering $\alpha$ of $G$, $\nexists \epsilon_{\alpha}(x) \subseteq E(G[N_G(x)])$ s.t. the subgraph $G-x-\epsilon_{\alpha}(x)$ is moral. This contradicts to $G$ being moral, so $G'$ must be moral. 
\end{proof}

Based on Lemma \ref{lm:1kp}, we can prove that the morality of maximum degree $3$ graphs can be checked by recursively removing a simplicial node and all the edges between its neighbours. 
\begin{lemma}
\label{lm:wrs_deg3}
Let $G=(V,E)$ be a moral graph with $\Delta(G)=3$. If $\exists x \in V$ with $D_G(x)=\emptyset$, then $G'=G-x-E(G[N_G(x)])$ is moral. 
\end{lemma}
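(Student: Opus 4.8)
The plan is to reduce to Lemma \ref{lm:1kp} whenever possible and to isolate the single local configuration that escapes its hypothesis. Since $\Delta(G)=3$ and $D_G(x)=\emptyset$, I would first split on $d(x)$. If $d(x)=1$ there are no pairs of neighbours, so the hypothesis of Lemma \ref{lm:1kp} holds vacuously and $G'=G-x$ is moral. If $d(x)=3$, simpliciality forces $N_G(x)=\{u,v,w\}$ to be a triangle, and the degree bound then forces each of $u,v,w$ to have degree exactly $3$; hence none of them has a neighbour outside $N_G[x]$, every pairwise common neighbourhood lies in $N_G[x]$, and Lemma \ref{lm:1kp} applies directly.

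The only remaining case is $d(x)=2$, say $N_G(x)=\{u,v\}$ with $uv\in E$, so that $E(G[N_G(x)])=\{uv\}$ and $G'=G-x-uv$. If $N_G(u)\cap N_G(v)\subset N_G[x]$ then Lemma \ref{lm:1kp} again yields the result. Otherwise there is a common neighbour $w\notin N_G[x]$, and the degree bound pins the local picture down completely: $u$ is adjacent exactly to $x,v,w$ and $v$ exactly to $x,u,w$. Thus in $G-x$ the vertices $u,v$ have no neighbours outside $\{u,v,w\}$, and $\{u,v,w\}$ is a triangle.

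For this last configuration I would invoke Lemma \ref{lm:wrs_start_any_sim}: as $G$ is moral and $x$ is simplicial, there is a perfect elimination kit beginning with $x$, so $G-x-\epsilon(x)$ is moral for some $\epsilon(x)\subseteq\{uv\}$. If $\epsilon(x)=\{uv\}$ we are done immediately. If $\epsilon(x)=\emptyset$, then $G-x$ is moral, and I would finish by a direct re-orientation argument: take a DAG $D$ whose moral graph is $G-x$, delete $u,v$ together with their incident arcs, and re-insert them as two fresh sinks with $w$ as their sole parent, adding only $\overrightarrow{wu},\overrightarrow{wv}$. The resulting $D'$ is acyclic, and since $u,v$ had no neighbours outside $\{u,v,w\}$ in $G-x$ while $w$'s arcs to the rest of the graph are untouched, moralizing $D'$ reproduces every adjacency of $G-x$ except $uv$, which disappears once $u,v$ are children rather than co-parents of $w$. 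Hence $G'=G-x-uv$ is the moral graph of $D'$, so $G'$ is moral.

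I expect the re-orientation step to be the crux: one must verify that turning $u,v$ into common children of $w$ removes precisely the edge $uv$, introduces no new fill edges (no new co-parent pair), and preserves acyclicity, and that this is exactly what $\Delta(G)=3$ guarantees by confining the neighbourhoods of $u$ and $v$ to $\{u,v,w\}$. Note that Lemma \ref{lm:no_common_nbr} cannot be applied here, since in $G'$ the non-adjacent pair $u,v$ still shares the common neighbour $w$; this is precisely why the degree-$3$ case requires an argument beyond Lemma \ref{lm:1kp}.
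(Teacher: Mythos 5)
Your proof is correct, but in the crux case it takes a genuinely different route from the paper. Both proofs split on $d(x)\in\{1,2,3\}$, dispatch $d(x)=1$ and $d(x)=3$ trivially, and use Lemma \ref{lm:1kp} when $N_G(u)\cap N_G(v)\subseteq N_G[x]$. For the remaining configuration (a common neighbour $w\notin N_G[x]$, forcing $d_G(u)=d_G(v)=3$), the paper argues via a separator: it first splits on whether $uv$ lies on a cycle of $G-x$, and in the cycle case observes that since $u,v$ are saturated in degree, the subgraph $H=G-\{x,u,v,w\}$ meets the rest of the graph only through $w$, so morality localizes. You instead invoke Lemma \ref{lm:wrs_start_any_sim} to get a perfect elimination kit starting at $x$, reducing to the single branch $\epsilon(x)=\emptyset$, which you close by explicit DAG surgery: re-hang $u,v$ as sinks whose sole parent is $w$. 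Your route buys rigour and self-containment --- the paper's assertion that $G'$ is moral when $uv$ is on no cycle of $G-x$ is left unjustified, and its separator claim is only sketched, whereas your re-orientation exhibits a witnessing DAG for $G'$ outright; the paper's argument, by contrast, is shorter and previews the separator-style reasoning reused in its degree-$4$ lemmas. One point you should make explicit in the surgery step: deleting $u,v$ from a DAG $D$ with moral graph $G-x$ could in principle also delete \emph{fill-in} edges between surviving vertices that were married through the common child $u$ or $v$; this does not happen here because $P_D(u)\subseteq N_{G-x}(u)=\{v,w\}$ and $P_D(v)\subseteq\{u,w\}$, so no two surviving vertices are co-parents of $u$ or of $v$ --- exactly the confinement of neighbourhoods you cite, but the marriage-loss direction deserves the one-line check alongside the no-new-marriages direction you do verify.
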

\begin{proof}
The cases when $d_G(x)=1$ or $3$ are trivial, because the former implies $x$ is a leave and the latter implies $G=K_4$ is a complete graph over $4$ nodes. 
%When $d_G(x)=1$ (i.e., the degree of node $x$ is 1), $x$ is a leaf. Hence, $E(G[N_G(x)])=\emptyset$ and consequently $G'$ is moral. When $d_G(x)=3$, the graph $G=K_4$ so $G'$ is moral.
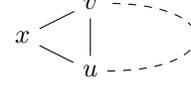
\begin{figure}
\centering
\begin{tikzpicture}[scale=0.9]
%\tikzstyle{place}=[circle,draw=blue!50,fill=blue!20,thick,inner sep=0pt,minimum size=6mm]
\begin{scope}           
    \node (a) at (0,0.5) {$x$};
    \node (b) at (1,0) {$u$};
    \node (c) at (1,1) {$v$};  
    \path [-] (a) edge (c);
    \path [-] (a) edge (b);
    \path [-] (c) edge (b);
\end{scope}
\draw[dashed] (b) .. controls (3,0) and (3,1) .. (c);
\end{tikzpicture}
\caption{A graph $G$ with $\Delta(G)=3$ and $D_G(x)=\emptyset$.}
\label{fg:deg3_1k3}
\end{figure}
For the case when $d_G(x)=2$, assume $N_G(x)=\{u,v\}$ (Figure \ref{fg:deg3_1k3}). If the edge $uv$ is not in a cycle in the subgraph $G-x$, then
$G'$ is moral. Suppose $uv$ is in a cycle in $G-x$. If $N_G(u) \cap N_G(v) = \{x,y\}$ s.t. $x \neq y$, then the subgraph $H=G-\{x,u,v,y\}$ must be moral. This is because $d_G(u)=d_G(v)=\Delta(G)$, so $H$ is connected to the rest of the graph via $y$ only. If $N_G(u) \cap N_G(v) = \{x\}$, by Lemma \ref{lm:1kp} $G'$ is moral. 
\end{proof}
\begin{algorithm}[]
\caption{Checking morality for maximum degree 3 graphs}
\label{alg:wrs_deg3}
\begin{algorithmic}[]
	\STATE {\bfseries Input:} graph $G=(V,E)$ s.t. $\Delta(G)=3$ 
    \WHILE{$\exists x$ s.t. $D_G(x)=\emptyset$}    	
    	\STATE {$G=G-x-E(G[N_G(x)])$}
    \ENDWHILE
    \IF{$G=\emptyset$}    
    	\STATE{\bfseries return}{ T} 
    \ELSE 
    	\STATE{\bfseries return}{ F}
    \ENDIF
\end{algorithmic}
\end{algorithm}

\begin{theorem}
\label{thm:wrs_deg3}
The morality of maximum degree $3$ graphs can be decided in polynomial time. 
\end{theorem}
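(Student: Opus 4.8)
The plan is to show that Algorithm~\ref{alg:wrs_deg3} decides morality and runs in polynomial time. The decisive feature of the algorithm is that it is greedy and backtracking-free: whenever it locates a node $x$ with $D_G(x)=\emptyset$, it deletes $x$ together with the \emph{entire} edge set $E(G[N_G(x)])$ rather than searching over candidate excesses. Consequently the proof reduces to two claims, that this maximal-excess choice never destroys morality on a moral input, and that a successful run exhibits a genuine perfect elimination kit. The first claim is exactly Lemma~\ref{lm:wrs_deg3}, which holds only because $\Delta(G)=3$; this is the lever that eliminates backtracking.

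For correctness I would prove that $G$ is moral if and only if the algorithm returns T. The soundness direction is direct: if the algorithm empties the graph, then reading the order of removed vertices as an ordering $\alpha$ and the deleted neighbour-edge sets as an excess $\epsilon_\alpha$ yields an elimination kit in which every vertex is simplicial in the subgraph current at its removal, i.e.\ a perfect elimination kit; hence $G$ is moral by Theorem~\ref{thm:wrs_has_pek} and Theorem~\ref{thm:wrs_equal_moral}. The completeness direction I would prove by induction on $|V|$, with the invariant ``moral and $\Delta\le 3$''. If $G$ is moral and non-empty then, being WRS, it has a vertex $x$ with $D_G(x)=\emptyset$, so the loop guard fires; deleting $x$ and all edges among its neighbours gives $G'$, which is moral by Lemma~\ref{lm:wrs_deg3} (the degree-$\le 2$ situations being immediate) and still satisfies $\Delta(G')\le 3$ since removing vertices and edges cannot raise any degree. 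The inductive hypothesis applied to $G'$ then drives it, and hence $G$, to the empty graph, so the algorithm returns T.

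The main obstacle is entirely concentrated in Lemma~\ref{lm:wrs_deg3}: the claim that on a moral input the full neighbour-edge set can always be deleted without losing morality. For larger maximum degree this fails --- the correct subset of excess edges cannot be chosen locally --- which is exactly the source of NP-completeness in the unrestricted setting, so the theorem hinges on the degree-$3$ structure that lemma exploits. Granting it, the two directions above combine to give ``moral iff returns T'', and since the algorithm always halts with either T or F, it returns F exactly on non-moral inputs; the algorithm therefore decides morality.

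Finally, for the running time I would count directly. Each loop iteration removes one vertex, so there are at most $|V|$ iterations. Within an iteration, testing $D_G(x)=\emptyset$ inspects at most the three pairs of $x$'s neighbours, so scanning all vertices for a simplicial one costs $O(|V|)$, and deleting the at most three neighbour-edges costs $O(1)$. This yields an $O(|V|^2)$ bound overall, establishing the polynomial-time claim.
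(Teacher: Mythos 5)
Your proposal is correct and takes essentially the same route as the paper: both reduce the theorem to Algorithm~\ref{alg:wrs_deg3}, whose validity rests entirely on Lemma~\ref{lm:wrs_deg3} (greedy removal of a simplicial node with its full neighbour-edge set preserves morality), followed by a direct polynomial operation count. Your write-up is in fact somewhat more explicit than the paper's, which compresses the soundness direction (a successful run yields a perfect elimination kit, hence morality via Theorem~\ref{thm:wrs_has_pek} and Theorem~\ref{thm:wrs_equal_moral}) and the completeness induction into the single remark that the algorithm ``can be deduced directly'' from the lemma.
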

\begin{proof}
  A straightforward algorithm (Algorithm \ref{alg:wrs_deg3}) for
  checking morality for maximum degree $3$ graphs can be deduced
  directly from Lemma \ref{lm:wrs_deg3}. The algorithm returns T when it reaches the empty graph, otherwise it returns F when stucking at a non-empty subgraph that has no simplicial node.

  A graph $G$ with $n$ nodes can be represented by an adjacency
  list, from which it takes polynomial time to find $N(x)$ for
  $x \in V$. Since $|N(x)| \le \Delta(G)=3$, it also takes polynomial time to
  verify $D(x)=\emptyset$. So the time complexity of finding a simplicial node is polynomial. The operations of removing $x$,
  $\{xy \in E\mid \forall y \in N_G(x)\}$ and $E(G[N(x)])$ take
  constant time. The while loop repeats at most $n$ times, so
  Algorithm \ref{alg:wrs_deg3} runs in polynomial time.
\end{proof}
%\citet{rose1976algorithmic} proved chordality can be checked in polynomial time using a specially-designed data structure. Algorithm \ref{alg:wrs_deg3}'s complexity may be reduced further by using a different data structure, but it is beyond the focus of this paper.  
The rest of this section focuses on graphs with maximum degree
$4$. Simplicial nodes in these graphs are treated differently in a
fixed order, depending on their degrees. Once simplicial nodes
satisfying certain conditions are removed, there are no other
simplicial nodes that satisfy the same conditions. First, we get rid
of simplicial nodes with degrees $1,3$ and $4$.

\begin{lemma}
\label{lm:deg1_4_3}
Let $G=(V,E)$ be a moral graph with $\Delta(G)=4$. If $\exists x \in V$ with $D_G(x)=\emptyset$ and $d_G(x)\in \{1,3,4\}$, then $G'=G-x-E(G[N(x)])$ is moral. 
\end{lemma}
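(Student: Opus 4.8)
The plan is to mirror the structure of Lemma \ref{lm:wrs_deg3}, but to handle the degrees $1$, $3$, $4$ separately since the maximum degree is now $4$. Throughout, I will exploit Lemma \ref{lm:1kp}: if $x$ is simplicial and every pair of its neighbours has all common neighbours inside $N_G[x]$, then $G' = G-x-E(G[N_G(x)])$ is moral. So the whole task reduces to checking, case by case, that when $d_G(x)\in\{1,3,4\}$ this ``no outside common neighbour'' hypothesis is either automatically satisfied or can be arranged, so that Lemma \ref{lm:1kp} applies directly.

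The easy cases first. When $d_G(x)=1$, the node $x$ is a leaf with no neighbouring edges to remove, so $G' = G-x$ and morality is obviously preserved (there are no pairs in $N_G(x)$ to worry about). When $d_G(x)=4$, the fact that $\Delta(G)=4$ and $D_G(x)=\emptyset$ forces $N_G[x]$ to be a $K_5$ whose four outer vertices already have full degree, so no neighbour of $x$ can have any common neighbour outside $N_G[x]$; the hypothesis of Lemma \ref{lm:1kp} holds and we are done. The substantive case is $d_G(x)=3$, where $N_G(x)=\{u,v,w\}$ forms a triangle and each of $u,v,w$ has at most one further neighbour outside $N_G[x]$ (since $\Delta(G)=4$). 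I would argue that if some pair, say $u,v$, had a common neighbour $y\notin N_G[x]$, then $u$ and $v$ would each already use all four of their edge slots (three inside $N_G[x]$ plus $y$), so the subgraph on $\{x,u,v,w,y\}$ connects to the rest of $G$ only through $w$ and possibly $y$'s other edges; I would then show this local block can be peeled off as a moral piece exactly as in the $d_G(x)=2$ argument of Lemma \ref{lm:wrs_deg3}, reducing back to Lemma \ref{lm:1kp}. If no such outside common neighbour exists, Lemma \ref{lm:1kp} applies immediately.

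The main obstacle will be the $d_G(x)=3$ subcase where two neighbours do share an outside vertex $y$. The degree constraint $\Delta(G)=4$ is doing the real work here: it sharply limits how many vertices can be pulled into a common-neighbour conflict, which is precisely why degree $4$ remains tractable while degree $5$ becomes NP-complete. I expect the cleanest route is the contradiction template sketched in the paragraph preceding Lemma \ref{lm:no_common_nbr}: assume $G'$ is not moral, use Lemma \ref{lm:no_common_nbr} to propagate non-morality to every choice of excess $F'\subseteq E(G[N_G(x)])$ at $x$, and thereby contradict the assumption that $G$ is moral together with Lemma \ref{lm:wrs_start_any_sim} (which lets us start elimination at $x$). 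The delicate bookkeeping is verifying that after removing $E(G[N_G(x)])$ the relevant neighbour pairs genuinely have empty common neighbourhoods in $G'$, which requires carefully tracking the at-most-one outside edge each of $u,v,w$ can carry under the degree bound.
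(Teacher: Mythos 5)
Your proposal is correct and takes essentially the same route as the paper's own proof: the paper likewise dispatches $d_G(x)=1$ as a leaf, $d_G(x)=4$ by noting that the degree bound forces $G=K_5$, and $d_G(x)=3$ by declaring it ``similar'' to the degree-$2$ case of Lemma~\ref{lm:wrs_deg3} --- which is exactly your reduction via Lemma~\ref{lm:1kp} when no pair of neighbours shares an outside vertex, plus the degree-saturation separator argument when some pair does. If anything, your treatment of the $d_G(x)=3$ subcase spells out more of the bookkeeping than the paper's one-line deferral.
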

\begin{proof}
  For $x \in V$ with $D_G(x)=\emptyset$, if $d_G(x)=1$ then $x$ is a
  leaf. If $d_G(x)=3$, the case is similar as having a degree $2$ simplicial node in a maximum degree $3$ graph shown in 
  Lemma \ref{lm:wrs_deg3}. If $d_G(x)=4$, the graph $G=K_5$ is a complete graph over $5$ nodes. Therefore, $G'$ is moral. 
\end{proof}

Next, we deal with degree $2$ simplicial nodes.Let $K_3^m$ denote a maximal stack of $m$ $K_3$s for $m \ge 1$. Maximal indicates that the length $m$ cannot be increased by adding more nodes in the stack. For example, Figure \ref{fg:2k3s} contains $K_3^2$ a maximal stack of $2$ $K_3$s. Corollary \ref{cor:1k3} is a special case of Lemma \ref{lm:1kp} when $x$ is a simplicial node in $K_3 \subset G$. 
\begin{corollary}
\label{cor:1k3}
Let $G=(V,E)$ be a moral graph with $\Delta(G)=4$. If $\exists x \in K_3^1 \subset G$ with $D_G(x)=\emptyset$ and $d_G(x)=2$, then $G'=G-x-E(G[N(x)])$ is moral.
\end{corollary}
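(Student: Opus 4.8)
The plan is to show that the hypotheses of this corollary are simply a restricted instance of the hypotheses of Lemma \ref{lm:1kp}, so that the conclusion that $G'=G-x-E(G[N(x)])$ is moral follows by a direct appeal to that lemma with no fresh moralization argument. Concretely, I would argue that the condition ``$x \in K_3^1$'' forces the common-neighbour hypothesis $N_G(u)\cap N_G(v)\subset N_G[x]$ required by Lemma \ref{lm:1kp}.

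First I would unpack the structural assumptions. Since $D_G(x)=\emptyset$ and $d_G(x)=2$, the two neighbours $N_G(x)=\{u,v\}$ must themselves be adjacent, so $x,u,v$ induce a triangle; this is exactly the single $K_3$ witnessing $x \in K_3^1 \subset G$. Because $N_G(x)$ contains only the one pair $\{u,v\}$, the hypothesis of Lemma \ref{lm:1kp} collapses to the single requirement that $N_G(u)\cap N_G(v)\subseteq N_G[x]=\{x,u,v\}$. The central step is then to translate the maximality of the stack into this requirement: if $u$ and $v$ shared a common neighbour $w\neq x$, then $\{x,u,v,w\}$ would induce the two triangles $xuv$ and $uvw$ glued along the edge $uv$, i.e. a $K_3^2$, contradicting the assumption that the stack containing $x$ has maximal length $1$. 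Note that $d_G(x)=2$ rules out any extension of the stack through an edge incident to $x$, so the only way to enlarge it is along $uv$; maximality therefore says precisely that no such $w$ exists. Hence $N_G(u)\cap N_G(v)=\{x\}\subset N_G[x]$, and I would invoke Lemma \ref{lm:1kp} to conclude.

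The main obstacle I anticipate is definitional rather than combinatorial: the argument hinges on pinning down exactly what a maximal stack $K_3^m$ is (triangles glued successively along a shared edge) and confirming that, under $d_G(x)=2$, ``maximal of length $1$'' is equivalent to ``$u$ and $v$ have no common neighbour besides $x$''. Once that equivalence is made explicit, the corollary is an immediate corollary of Lemma \ref{lm:1kp} exactly as the surrounding text advertises.
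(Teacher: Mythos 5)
Your proposal is correct and follows exactly the paper's route: the paper proves this corollary by a bare appeal to Lemma \ref{lm:1kp} (its entire proof reads ``This follows from Lemma \ref{lm:1kp}''), treating the $K_3^1$ hypothesis as a special case of that lemma's common-neighbour condition. Your additional verification that maximality of the stack, together with $d_G(x)=2$, forces $N_G(u)\cap N_G(v)=\{x\}\subset N_G[x]$ is precisely the detail the paper leaves implicit, and it is argued correctly.
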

\begin{proof}
This follows from Lemma \ref{lm:1kp}. 
\end{proof}

The next lemma states how morality can be preserved when dealing with simplicial nodes in $K_3^2$. 
\begin{lemma}
\label{lm:2k3s}
Let $G=(V,E)$ be a moral graph with $\Delta(G)=4$. If $\exists x \in K_3^2 \subset G$ with $D_G(x)=\emptyset$ and $d_G(x)=2$, then $G'=G-x$ is moral.
\begin{figure}
\centering
\begin{tikzpicture}[scale=0.9]
\begin{scope}[]               
    \node (a) at (0,0) {$v_1$};
    \node (b) at (1,0) {$v_2$};
    \node (c) at (1,1) {$v_3$};  
    \node (d) at (2,0) {$v_4$};
        
    \path [-] (a) edge (c);
    \path [-] (a) edge (b);
    \path [-] (b) edge (d);
    \path [-] (c) edge (b);
    \path [-] (c) edge (d);
    \draw[dashed] (d) -- (2.7,0.5);
    \draw[dashed] (d) -- (3,0);        
\end{scope}
\draw[dashed] (b) .. controls (4,-2) and (5,1) .. (c);
\end{tikzpicture}
\vskip -0.4in % manually reduce space between figure and caption
\caption{A graph $G$ with $\Delta(G)=4$ and $K_3^2\subset G$.}
\label{fg:2k3s}
\end{figure}
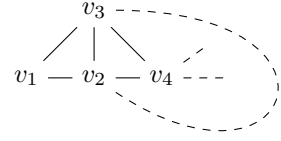
\end{lemma}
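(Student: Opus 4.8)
The plan is to prove the lemma directly by exhibiting a perfect elimination kit (PEK) for $G'=G-x$. Write $x=v_1$ and let $e=v_2v_3$ be the spine shared by the two triangles of the maximal stack $K_3^2$, with $v_4$ the second apex (Figure \ref{fg:2k3s}); note that, unlike Corollary \ref{cor:1k3}, we must \emph{keep} $e$, since it still belongs to the triangle $v_2v_3v_4$. Since $G$ is moral it has a PEK, and by Lemma \ref{lm:wrs_start_any_sim} we may take one $\kappa=(\alpha,\epsilon_\alpha)$ with $\alpha(1)=x$. Because $N_G(x)=\{v_2,v_3\}$ and $E(G[N_G(x)])=\{e\}$, the excess $\epsilon_\alpha(x)$ is either $\emptyset$ or $\{e\}$. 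If it is $\emptyset$, the eliminated graph after step $1$ is already $G-x$, so the suffix $\alpha(2),\dots,\alpha(n)$ of $\kappa$ is a PEK of $G-x$ and we are done; the work is the case $\epsilon_\alpha(x)=\{e\}$.

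In that case I would reuse the same ordering on $V\setminus\{x\}$ but retain $e$, letting it be deleted for free when the first of $v_2,v_3$ is eliminated. For any node $y\notin\{v_2,v_3\}$, putting $e$ back can only turn the non-edge $v_2v_3$ into an edge inside $N(y)$, so $D(y)$ can only shrink; hence every such step stays valid. The entire verification therefore reduces to the single step that eliminates the first of $v_2,v_3$, say $v_2$, while $v_3$ is still present — the only two vertices whose deficiency can grow when $e$ is reinstated.

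Here I would invoke maximality of the stack, which gives $N_G(v_2)\cap N_G(v_3)=\{v_1,v_4\}$ and $N_G(v_2)\cap N_G(v_4)=\{v_3\}$, together with $\Delta(G)=4$, which leaves $v_2$ at most one neighbour $a$ outside $\{v_1,v_3,v_4\}$. First I would observe that in $\kappa$ the apex $v_4$ cannot be eliminated before both $v_2$ and $v_3$: otherwise at $v_4$'s step both are still neighbours of $v_4$ (the edges $v_2v_4,v_3v_4$ are deletable as excess only at their unique common neighbours $v_3$, resp.\ $v_2$), yet non-adjacent because $e$ was removed at step $1$, forcing $D(v_4)\neq\emptyset$ and contradicting that $\kappa$ is perfect. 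Consequently $v_2$ precedes $v_4$, so at $v_2$'s step $v_4$ is still a neighbour while $v_1$ and $v_3$ (edge $e$) are already absent; thus $N(v_2)\subseteq\{v_4,a\}$. Since maximality makes $a\not\sim v_4$, the set $\{v_4,a\}$ is not a clique, so simpliciality of $v_2$ in $\kappa$ forces $N(v_2)=\{v_4\}$. Restoring $e$ makes $v_2$'s neighbourhood $\{v_3,v_4\}$, a clique, so $v_2$ is still simplicial; after this step $e$ is gone and the kit coincides with the remainder of $\kappa$. This yields a PEK of $G-x$, whence $G-x$ is moral by Theorem \ref{thm:wrs_has_pek} and Theorem \ref{thm:wrs_equal_moral}.

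The main obstacle is exactly the reinstatement step above: reintroducing $e$ enlarges the neighbourhoods of the two vertices whose deficiency can increase, so one must certify that the first of them to be eliminated is still locally simplicial. This is where both hypotheses are indispensable — maximality of $K_3^2$ pins the common neighbours of $v_2,v_3$ to $\{v_1,v_4\}$ and forbids $a\sim v_4$, while $\Delta(G)=4$ guarantees the single outside neighbour $a$ and hence the collapse $N(v_2)=\{v_4\}$. I expect the analogous bookkeeping to fail once a vertex may carry two outside neighbours, which is consistent with the degree-$5$ hardness established later.
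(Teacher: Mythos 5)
Your proof is correct and takes essentially the same route as the paper's: both rest on the key claim that any elimination of $G''=G-v_1-v_2v_3$ transfers to an elimination of $G'=G-v_1$, which the paper states contrapositively (``if $G'$ has no PEK then $G''$ has none'') and you establish constructively by reinstating $e=v_2v_3$ in the kit obtained from Lemma \ref{lm:wrs_start_any_sim}. Your detailed check at the step eliminating the first of $v_2,v_3$ --- using maximality of $K_3^2$ and $\Delta(G)=4$ to force $N(v_2)=\{v_4\}$ and hence preserved simpliciality --- is a careful spelling-out of exactly what the paper compresses into its assertion that the orderings and excesses of $G''$ form a ``subspace'' of those of $G'$.
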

\begin{proof}
Suppose $G$ is labelled as shown in Figure \ref{fg:2k3s}, where $x=v_1$. Assuming $G'$ is not moral. It implies $G'$ has no PEK. Let $G''=G-v_1-v_2v_3$. The removal of the edge $v_2v_3$ implies that $v_4$ in $G''$ cannot be locally simplicial before $v_2$ or $v_3$. It also implies $v_3v_4 \notin \epsilon_{\alpha}(v_2)$ or $v_2v_4 \notin \epsilon_{\alpha}(v_3)$, if $v_2$ or $v_3$ ever becomes locally simplicial in $G''$ for an ordering $\alpha$. Hence, the space of all orderings of $G''$ is a subspace of all orderings of $G'$. And for any local simplicial node, its excess in $G''$ has no more options than in $G'$. Therefore, if $G'$ has no PEK, then $G''$ has no PEK either. This contradicts to $G$ being moral.
\iffalse
\textit{Case 1:} If $\exists u \in N_G(v_2)\cap N_G(v_3)$ s.t. $u \notin \{v_1,v_4\}$, then $d_G(v_2)=d_G(v_3)=\Delta(G)$. Let $H$ be the induced subgraph of $G$ over $\{v_1,v_2,v_3,v_4,u\}$. Then $H$ is chordal and $N_G(G-H)=\{v_4, u\}$, which implies $G'$ is moral if $G$ is moral. 
\textit{Case 2:} Assume $G'$ is not moral if case 1 is not met. It is
sufficient to show the subgraph $G''=G-x-v_2v_3$ is not moral. For
$G''$ to be moral, the removal of $v_2v_3$ must turn either or both
$v_2$ and $v_3$ into locally simplicial nodes. If so, $v_4$ must not
be locally simplicial in $G'$ and $d_{G'}(v_2)=d_{G'}(v_3)=3$. Hence,
$D_{G''}(v_2)\neq \emptyset$ and $D_{G''}(v_3) \neq \emptyset$. In
addition, $v_1 \in K_3^2$ but not $K_3^3$ entails
$N_{G''}(v_2) \cap N_{G''}(v_4) = N_{G''}(v_3) \cap N_{G''}(v_4) =
\emptyset$, so none of $v_2$ and $v_3$ can be turned into locally
simplicial nodes by removing the edge between them. Therefore, $G''$
cannot be moral if $G'$ is non-moral.
\fi
\end{proof}

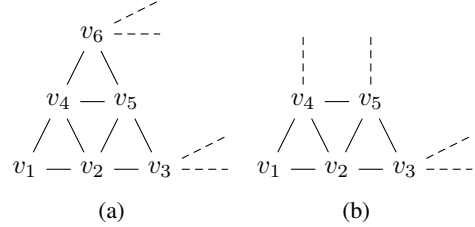
\begin{figure}
\centering
\subfigure[]{
\label{fg:3k3s_dist_1}
\begin{tikzpicture}[scale=0.9]
	\node (A) at (-0.5+8,-2-3-0.5) {$v_1$};
    \node (B) at (1.5+8,-2-3-0.5) {$v_3$};
    \node (F) at (0.5+8,0-3-0.5) {$v_6$};
    \node (H) at (0+8,-1-3-0.5) {$v_4$};
    \node (I) at (1+8,-1-3-0.5) {$v_5$};
    \node (J) at (0.5+8,-2-3-0.5) {$v_2$};  
	\path [-] (A) edge (J);
    \path [-] (A) edge (H);
    \path [-] (B) edge (I);
    \path [-] (B) edge (J);
    \path [-] (F) edge (H);
    \path [-] (F) edge (I);
    \path [-] (H) edge (I);
    \path [-] (H) edge (J);
    \path [-] (I) edge (J);
    \path [densely dashed] (F) edge (9.5,-3);
    \path [densely dashed] (F) edge (9.5,-3.5);
    \path [densely dashed] (B) edge (10.5,-5);    
    \path [densely dashed] (B) edge (10.5,-5.5);    
%\draw[densely dashed] (F) .. controls (9,-3-0.5) and (11,-2-0.5) .. (B);
\end{tikzpicture}
}
\subfigure[]{
\label{fg:3k3s_dist_infty}
\begin{tikzpicture}[scale=0.9]
%\tikzstyle{place}=[circle,draw=blue!50,fill=blue!20,thick,inner sep=0pt,minimum size=6mm]
    \node (A) at (-0.5+1,-2) {$v_1$};
    \node (B) at (1.5+1,-2) {$v_3$};
    \node (H) at (0+1,-1) {$v_4$};
    \node (I) at (1+1,-1) {$v_5$};
    \node (J) at (0.5+1,-2) {$v_2$};               
    \path [-] (A) edge (J);
    \path [-] (A) edge (H);
    \path [-] (B) edge (I);
    \path [-] (B) edge (J);
    \path [-] (H) edge (I);
    \path [-] (H) edge (J);
    \path [-] (I) edge (J); 
    \path [densely dashed] (H) edge (1,0);
    \path [densely dashed] (I) edge (2,0);
    \path [densely dashed] (B) edge (3.5,-2);
    \path [densely dashed] (B) edge (3.5,-1.5);
\end{tikzpicture}
}
\caption{Two graphs $G$ with $\Delta(G)=4$ and $K_3^3 \subset G$ s.t. the distance $d(v_4,v_5)\in \{2, \infty\}$ in $G-\{v_1,v_2,v_3\}-v_4v_5$.}
\label{fg:3k3s_dist_2_infty}
\end{figure}

The following three lemmas consider simplicial nodes that are in $K_3^3$. Within this case, simplicial nodes are treated differently, depending on the distance $d(v_4, v_5)$ in $G-\{v_1,v_2,v_3\}-v_4v_5$ as shown in Figure \ref{fg:3k3s_dist_2_infty} and \ref{fg:3k3s_dist_3}. 
\begin{lemma}
\label{lm:3k3s_dist_2_infty}
Let $G=(V,E)$ be a moral graph with $\Delta(G)=4$. If there is a simplicial node $v_1$ as shown in Figure \ref{fg:3k3s_dist_2_infty}, then $G'=G-v_1$ is moral.
\end{lemma}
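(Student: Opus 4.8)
The plan is to follow the contradiction strategy used for the earlier stacked-triangle lemmas (e.g.\ Lemma~\ref{lm:2k3s}). Here $v_1$ is simplicial with $N_G(v_1)=\{v_2,v_4\}$ and $v_2v_4\in E$, so the subgraph induced on its neighbours contains only the single edge $v_2v_4$. Consequently the only two ways to eliminate $v_1$ in a perfect elimination kit are with excess $\epsilon(v_1)=\emptyset$, giving $G'=G-v_1$, or $\epsilon(v_1)=\{v_2v_4\}$, giving $G''=G-v_1-v_2v_4$. Since $G$ is moral and $v_1$ is simplicial, Lemma~\ref{lm:wrs_start_any_sim} guarantees that at least one of $G'$ and $G''$ is moral. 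I would therefore assume, for contradiction, that $G'=G-v_1$ is \emph{not} moral and aim to prove that $G''=G-v_1-v_2v_4$ is not moral either, which is then impossible.

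The core step is to show that deleting the extra edge $v_2v_4$ can only shrink the space of perfect elimination kits. The graphs $G'$ and $G''$ differ only in $v_2v_4$, and within the $K_3^3$ configuration of Figure~\ref{fg:3k3s_dist_2_infty} the vertex $v_5$ is the unique common neighbour of $v_2$ and $v_4$ once $v_1$ is removed. Because an excess may only delete existing edges and never reinstate the missing edge $v_2v_4$, in $G''$ the node $v_5$ retains the deficiency $\{v_2v_4\}$ as long as $v_2,v_4,v_5$ are all present; hence $v_5$ cannot be eliminated before $v_2$ or $v_4$, and no excess of $v_2$ or $v_4$ in $G''$ may reference the now-absent edge. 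I would use these two facts, exactly as in Lemma~\ref{lm:2k3s}, to argue that every ordering admissible for $G''$ is admissible for $G'$ and that every locally simplicial node has no more excess options in $G''$ than in $G'$; taking the contrapositive, a perfect elimination kit of $G''$ would restrict to one of $G'$, so ``$G'$ not moral'' forces ``$G''$ not moral'', contradicting Lemma~\ref{lm:wrs_start_any_sim}.

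The delicate point -- and the reason the distance $d(v_4,v_5)$ enters the hypothesis -- is that removing $v_2v_4$ lowers the degrees of $v_2$ and $v_4$ and could, a priori, make one of them locally simplicial \emph{earlier} in $G''$ than is ever possible in $G'$, which would open a kit for $G''$ with no counterpart in $G'$. The condition $d(v_4,v_5)\in\{2,\infty\}$ in $G-\{v_1,v_2,v_3\}-v_4v_5$ is precisely what blocks this: either $v_4$ and $v_5$ share the outside neighbour $v_6$ (the $d=2$ graph of Figure~\ref{fg:3k3s_dist_1}) or they lie in different components (the $d=\infty$ graph of Figure~\ref{fg:3k3s_dist_infty}), and in both situations no path through the remainder of the graph can grant $v_2$ or $v_4$ a new local simpliciality in $G''$. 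I expect the main obstacle to be the careful verification of this claim in both subcases, together with confirming that the intermediate value $d(v_4,v_5)=3$ behaves genuinely differently (it requires a separate lemma), so that the argument above is valid only for $d(v_4,v_5)\in\{2,\infty\}$.
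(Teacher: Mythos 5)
Your setup is sound: since $v_1$ is simplicial with $N_G(v_1)=\{v_2,v_4\}$ and $v_2v_4\in E$, Lemma~\ref{lm:wrs_start_any_sim} (together with Theorems~\ref{thm:wrs_has_pek} and \ref{thm:wrs_equal_moral}) does guarantee that at least one of $G'=G-v_1$ and $G''=G-v_1-v_2v_4$ is moral, so it suffices to show that ``$G'$ not moral'' forces ``$G''$ not moral''. The gap is in your core step: the claim that every ordering admissible for $G''$ is admissible for $G'$, so that a PEK of $G''$ restricts to one of $G'$, is false in exactly this configuration. In Figure~\ref{fg:3k3s_dist_1}, $N_{G''}(v_2)=\{v_3,v_5\}$ with $v_3v_5\in E$, so $v_2$ is simplicial in $G''$, whereas $D_{G'}(v_2)=\{v_3v_4\}\neq\emptyset$; likewise $N_{G''}(v_4)=\{v_5,v_6\}$ with $v_5v_6\in E$, so $v_4$ is simplicial in $G''$, whereas $D_{G'}(v_4)=\{v_2v_6\}\neq\emptyset$ (the same happens for $v_2$ in Figure~\ref{fg:3k3s_dist_infty}). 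Hence $G''$ admits orderings beginning at $v_2$ or $v_4$ that have no counterpart in $G'$, and the restriction argument collapses at its first step. This is precisely where the present lemma differs from Lemma~\ref{lm:2k3s}: there, maximality of the $K_3^2$ guarantees that no vertex outside the stack is adjacent to both endpoints of the deleted edge, so the deletion cannot create new simplicial nodes; here the additional triangles of the $K_3^3$ supply exactly the edges $v_3v_5$ and $v_5v_6$ that do so. Your final paragraph asserts that the hypothesis $d(v_4,v_5)\in\{2,\infty\}$ blocks new local simpliciality of $v_2$ or $v_4$ in $G''$, but it does not: the new simpliciality arises immediately from the deletion of $v_2v_4$ itself, via edges internal to the figure, independently of any path through the remainder of the graph. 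The distance hypothesis serves a different purpose.

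The paper's proof accordingly does not attempt a kit-space inclusion. It argues locality of the disputed edge instead: the only difference between $G'$ and $G''$ is the $3$-clique $\{v_2,v_4,v_5\}$, and that clique can only be relevant to morality if it shares edges with cycles that still need to be broken in the rest of the graph. Because $d_G(v_2)=d_G(v_4)=\Delta(G)$ in Figure~\ref{fg:3k3s_dist_1} (so neither vertex has any edge leaving $\{v_1,\dots,v_6\}$), and because $d_G(v_2)=\Delta(G)$ together with $d(v_4,v_5)=\infty$ in Figure~\ref{fg:3k3s_dist_infty}, this clique cannot share an edge with any cycle meeting $G-\{v_1,\dots,v_6\}$; hence keeping or discarding $v_2v_4$ cannot change whether the remainder is eliminable, and ``$G'$ not moral'' does force ``$G''$ not moral''. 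To repair your argument you would need to replace the inclusion claim by a locality argument of this kind, or else show directly how a PEK of $G''$ that begins at $v_2$ or $v_4$ can be \emph{transformed} (not merely restricted) into a PEK of $G'$.
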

\begin{proof}
  Assume $G'$ is not moral. Let $G''=G'+v_2v_4$. The addition of
  the edge $v_2v_4$ makes a 3-clique over $\{v_2,v_4,v_5\}$. But a
  clique is only critical for turning $G''$ into a moral graph if it
  can break unbreakable cycles in $G'$. However,
  $d_G(v_2)=d_G(v_4)=\Delta(G)$ in Figure \ref{fg:3k3s_dist_1} implies
  that the 3-clique does not share edges with any cycles that could
  appear in the subgraph $G-\{v_1,\dots,v_6\}$. In Figure
  \ref{fg:3k3s_dist_infty}, $d_G(v_2)=\Delta(G)$ and $d(v_4,v_5)=\infty$ leads to the same conclusion. Hence, $G''$
  is not moral. This contradicts to $G$ being moral. 
\end{proof}

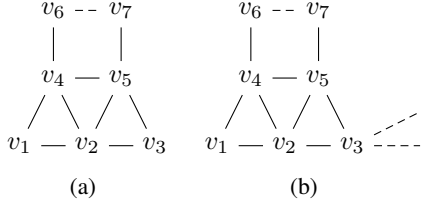
\begin{figure}
\centering
\subfigure[]{
\label{fg:3k3s_dist_3_1}
\begin{tikzpicture}[scale=0.9]
    \node (A) at (-0.5+1,-2) {$v_1$};
    \node (B) at (1.5+1,-2) {$v_3$};
    \node (H) at (0+1,-1) {$v_4$};
    \node (I) at (1+1,-1) {$v_5$};
    \node (J) at (0.5+1,-2) {$v_2$};     
    \node (C) at (1, 0) {$v_6$};
    \node (D) at (2, 0) {$v_7$};
    \path [-] (A) edge (J);
    \path [-] (A) edge (H);
    \path [-] (B) edge (I);
    \path [-] (B) edge (J);
    \path [-] (H) edge (I);
    \path [-] (H) edge (J);
    \path [-] (I) edge (J); 
    \path [-] (H) edge (C);
    \path [-] (I) edge (D);
    \path [densely dashed] (C) edge (D);
\end{tikzpicture}
}
\subfigure[]{
\label{fg:3k3s_dist_3_2}
\begin{tikzpicture}[scale=0.9]
    \node (A) at (-0.5+1,-2) {$v_1$};
    \node (B) at (1.5+1,-2) {$v_3$};
    \node (H) at (0+1,-1) {$v_4$};
    \node (I) at (1+1,-1) {$v_5$};
    \node (J) at (0.5+1,-2) {$v_2$};     
    \node (C) at (1, 0) {$v_6$};
    \node (D) at (2, 0) {$v_7$};
    \path [-] (A) edge (J);
    \path [-] (A) edge (H);
    \path [-] (B) edge (I);
    \path [-] (B) edge (J);
    \path [-] (H) edge (I);
    \path [-] (H) edge (J);
    \path [-] (I) edge (J); 
    \path [-] (H) edge (C);
    \path [-] (I) edge (D);
    \path [densely dashed] (C) edge (D);
    \path [densely dashed] (B) edge (3.5,-2);
    \path [densely dashed] (B) edge (3.5,-1.5);
\end{tikzpicture}
}
\caption{Two graphs $G$ with $\Delta(G)=4$ and $K_3^3 \subset G$ s.t. the distance $d(v_4,v_5) \in [3, \infty)$ in $G-\{v_1,v_2,v_3\}-v_4v_5$.}
\label{fg:3k3s_dist_3}
\end{figure}

\begin{lemma}
Let $G=(V,E)$ be a moral graph with $\Delta(G)=4$. If there are two simplicial nodes $v_1,v_3$ as shown in Figure \ref{fg:3k3s_dist_3_1}, then $G'=G-\{v_1,v_3\}$ is moral.
\end{lemma}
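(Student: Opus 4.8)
The plan is to reduce everything to a single observation about the vertex $v_2$, using Lemma \ref{lm:wrs_start_any_sim} to supply a convenient elimination order. From Figure \ref{fg:3k3s_dist_3_1} I first record the two structural facts that drive the argument: $N_G(v_2)=\{v_1,v_3,v_4,v_5\}$, so $v_2$ attains the maximum degree $\Delta(G)=4$, and $v_4v_5\in E$. Consequently, in $G'=G-\{v_1,v_3\}$ the vertex $v_2$ retains exactly the two neighbours $v_4,v_5$, and since these are adjacent, $v_2$ is simplicial in $G'$.

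Next I would manufacture a moral graph by eliminating $v_1$ and $v_3$. Because $G$ is moral and $v_1$ is simplicial, Lemma \ref{lm:wrs_start_any_sim} gives a PEK of $G$ starting at $v_1$, so $G-v_1-F_1$ is moral for some $F_1\subseteq\{v_2v_4\}$. The vertex $v_3$ is still simplicial there, since its neighbours $v_2,v_5$ and the edge $v_2v_5$ are untouched by removing $v_1$ and, possibly, $v_2v_4$; so a second application of Lemma \ref{lm:wrs_start_any_sim} produces a moral graph $M=G-\{v_1,v_3\}-F_1-F_3$ with $F_3\subseteq\{v_2v_5\}$. Thus some choice of excesses over the two deleted triangles leaves a moral graph, and the remaining task is to show that these excesses may be taken empty.

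Here is the crux, a dichotomy on $F_1\cup F_3$. If $F_1=F_3=\emptyset$ then $M=G'$ and the proof is complete. Otherwise at least one of the edges $v_2v_4,v_2v_5$ has been deleted, and as these are the only edges incident to $v_2$ in $G-\{v_1,v_3\}$, the vertex $v_2$ has degree at most one in $M$. A vertex of degree at most one is simplicial and admits only the empty excess, so eliminating it from the moral graph $M$ (again via Lemma \ref{lm:wrs_start_any_sim}) shows that $M-v_2=G-\{v_1,v_2,v_3\}$ is moral. Finally, using that $v_2$ is simplicial in $G'$, I eliminate $v_2$ from $G'$ with empty excess; the result is exactly $G-\{v_1,v_2,v_3\}$, which is moral, so $G'$ is weakly recursively simplicial and hence moral by Theorem \ref{thm:wrs_equal_moral}.

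The step I expect to be the real content is the dichotomy: the point is that the two candidate excess edges are precisely the edges at $v_2$, so discarding any of them can only collapse $v_2$ into a leaf or isolated vertex rather than help. This is exactly where the hypotheses $d_G(v_2)=4$ and $v_4v_5\in E$ of Figure \ref{fg:3k3s_dist_3_1} are used, and where the distance condition $d(v_4,v_5)\ge 3$ enters behind the scenes: it forces $v_2$ to be the unique common neighbour of $v_4$ and $v_5$, so that no vertex other than $v_2$ could ever carry $v_4v_5$ as an excess. I would also note the degenerate subcase in which both excesses are nonempty and $v_2$ becomes isolated; there the same reduction applies, reading ``moral'' as ``weakly recursively simplicial'' (Theorem \ref{thm:wrs_has_pek}) so that the transient disconnection caused by isolating $v_2$ is harmless.
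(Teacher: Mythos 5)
Your proof is correct, and it supplies genuinely more than the paper does: the paper's entire proof of this lemma is the single sentence ``The proof is trivial,'' with no argument given. Your chain is sound within the paper's framework: Lemma \ref{lm:wrs_start_any_sim} lets you peel $v_1$ and then $v_3$ with some excesses $F_1\subseteq\{v_2v_4\}$, $F_3\subseteq\{v_2v_5\}$ (these are the only candidate excess edges because $\Delta(G)=4$ pins $N_G(v_2)=\{v_1,v_3,v_4,v_5\}$, so $E(G[N_G(v_1)])=\{v_2v_4\}$ and $E(G[N_G(v_3)])=\{v_2v_5\}$); any nonempty excess only strips edges incident to $v_2$, so $v_2$ drops to degree at most one in the moral graph $M$ and is eliminated with forced empty excess, giving that $G-\{v_1,v_2,v_3\}$ is weakly recursively simplicial; and since $v_2$ is simplicial in $G'$ (as $v_4v_5\in E$) with $G'-v_2=G-\{v_1,v_2,v_3\}$, Definition \ref{def:wrs} and Theorem \ref{thm:wrs_equal_moral} rebuild morality of $G'$. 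Two remarks. First, your closing claim that the distance condition $d(v_4,v_5)\ge 3$ ``enters behind the scenes'' is not accurate: no step of your argument uses it --- you only use that $v_1,v_3$ are simplicial, that $v_4v_5\in E$, and that $d_G(v_2)=4$ --- so your proof in fact establishes the conclusion for any $K_3^3$ whose two end nodes are simplicial, irrespective of that distance. This is harmless here (you prove a statement at least as strong as the lemma), but you should not cite a hypothesis as load-bearing when it is never invoked. Second, the disconnection caveat when both excesses are nonempty and $v_2$ becomes isolated is a real issue, since the paper restricts attention to connected graphs; your fix --- passing to WRS/perfect elimination kits via Theorem \ref{thm:wrs_has_pek}, where deleting an isolated vertex from an elimination kit is innocuous --- is the right one.
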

\begin{proof}
The proof is trivial. 
\end{proof}

\begin{lemma}
\label{lm:3k3s_dist_3}
Let $G=(V,E)$ be a moral graph with $\Delta(G)=4$. If there is a simplicial node $v_1$ as shown in Figure \ref{fg:3k3s_dist_3_2}, then  $G'=G-v_1-E(G[N_G(v_1)])$ is moral. 
\end{lemma}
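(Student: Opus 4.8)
The plan is to reuse the forcing argument that drives the earlier $K_3^3$ lemmas. Since $N_G(v_1)=\{v_2,v_4\}$ and $v_2v_4\in E$, the only edge inside $N_G(v_1)$ is $v_2v_4$, so $E(G[N_G(v_1)])=\{v_2v_4\}$ and the graph in question is $G'=G-v_1-v_2v_4$. As $G$ is moral and $v_1$ is simplicial, Lemma~\ref{lm:wrs_start_any_sim} supplies a perfect elimination kit of $G$ beginning with $v_1$; its first excess $\epsilon(v_1)$ is a subset of $\{v_2v_4\}$, so at least one of $G-v_1$ and $G-v_1-v_2v_4$ is moral. Hence it is enough to prove that $G-v_1$ is \emph{not} moral whenever $G'$ fails to be: the two statements together would contradict Lemma~\ref{lm:wrs_start_any_sim}, forcing $G'$ itself to be moral.

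First I would assume $G'$ is not moral and rewrite $G-v_1=G'+v_2v_4$, so that the question becomes whether re-inserting the single edge $v_2v_4$ can repair morality. In $G'$ the endpoints $v_2,v_4$ are non-adjacent yet share the common neighbour $v_5$ (both are joined to $v_5$ by stack edges), so Lemma~\ref{lm:no_common_nbr} cannot be invoked verbatim --- this is precisely the difficulty that the distance hypothesis is meant to resolve. Adding $v_2v_4$ back creates exactly one new triangle, $\{v_2,v_4,v_5\}$, and, exactly as in the proof of Lemma~\ref{lm:3k3s_dist_2_infty}, such a clique can turn a non-moral graph moral only if it shares an edge with a cycle that is otherwise unbreakable in $G'$. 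The crux is therefore to show that no unbreakable cycle of $G'$ meets the triangle $\{v_2,v_4,v_5\}$.

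This is where $d(v_4,v_5)\ge 3$ in $G-\{v_1,v_2,v_3\}-v_4v_5$ is used. That hypothesis says $v_4$ and $v_5$ have no common neighbour other than $v_2$; combined with $d_G(v_2)=d_G(v_4)=\Delta(G)=4$, it confines the effect of the new chord to the interior of the stack. I would argue that any cycle using an edge of $\{v_2,v_4,v_5\}$ is immediately chorded by that very triangle, while any cycle that escapes through $v_6$ and returns through $v_7$ has length at least that forced by the distance bound and is edge-disjoint from the triangle; in neither situation does the clique break a cycle that was unbreakable in $G'$. Consequently $G'+v_2v_4=G-v_1$ inherits the non-morality of $G'$, the promised contradiction with Lemma~\ref{lm:wrs_start_any_sim} follows, and $G'=G-v_1-E(G[N_G(v_1)])$ is moral.

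I expect the clique--cycle step to be the real obstacle. Because $v_2,v_4$ genuinely have the common neighbour $v_5$, I cannot quote Lemma~\ref{lm:no_common_nbr} and must instead track every cycle of $G'$ that passes through $v_5$ and verify that the triangle $\{v_2,v_4,v_5\}$ fails to chord it into a shorter moral configuration. The degree saturation of $v_2$ and $v_4$ together with the distance-$\ge 3$ separation of $v_4$ and $v_5$ are exactly the levers that should make this bookkeeping go through, but formalising ``the clique meets no unbreakable cycle'' for all such cycles simultaneously is the delicate point of the argument.
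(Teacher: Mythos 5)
Your reduction is sound as far as it goes: by Lemma~\ref{lm:wrs_start_any_sim} there is a PEK of $G$ beginning at $v_1$, its excess is $\emptyset$ or $\{v_2v_4\}$, so at least one of $G-v_1$ and $G'=G-v_1-v_2v_4$ is moral, and it would suffice to show that $G'$ non-moral forces $G-v_1=G'+v_2v_4$ non-moral. But the step you yourself flag as delicate is a genuine gap, and not merely one of bookkeeping: the only hypotheses your sketch invokes are the degree saturation $d_G(v_2)=d_G(v_4)=4$ and the bound $d(v_4,v_5)\ge 3$ in $G-\{v_1,v_2,v_3\}-v_4v_5$, yet both hold equally in the configuration of Figure~\ref{fg:3k3s_dist_3_1}, where the implication you want is \emph{false}. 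Concretely, take the graph of Figure~\ref{fg:3k3s_dist_3_1} with $v_6v_7$ realized as an edge: there $d(v_4,v_5)=3$ via $v_4$--$v_6$--$v_7$--$v_5$, and $G'=G-v_1-v_2v_4$ is not moral (its only simplicial nodes are $v_2$ and $v_3$; after they are eliminated, with any excesses, one is stuck at the chordless $4$-cycle $v_4$--$v_5$--$v_7$--$v_6$), yet $G-v_1$ \emph{is} moral: eliminate $v_3$, then $v_2$ becomes simplicial and can be eliminated with excess $\{v_4v_5\}$, leaving the path $v_4$--$v_6$--$v_7$--$v_5$. So re-inserting $v_2v_4$ \emph{can} repair morality under exactly the levers you cite. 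Your specific cycle claim also fails on this example: a cycle that ``escapes through $v_6$ and returns through $v_7$'' closes with the edge $v_4v_5$, which \emph{is} an edge of the new triangle $\{v_2,v_4,v_5\}$, and it is precisely by sharing $v_4v_5$ (enabling the excess $\{v_4v_5\}$ at $v_2$) that the triangle breaks the cycle.

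What separates Figure~\ref{fg:3k3s_dist_3_2} from Figure~\ref{fg:3k3s_dist_3_1} is that $v_3$ has additional neighbours and hence is not simplicial; since $d_G(v_2)=4$, edges at $v_2$ can never be added, so in $G-v_1$ the node $v_2$ cannot become simplicial before $v_3$ or $v_4$ is eliminated, and this blocking of the gadget is what any correct proof must exploit. Your sketch never uses it, so it would ``prove'' the false statement above. This is also where the paper's own argument lives: it does not do edge-addition or cycle bookkeeping at all, but argues globally that at this stage of Algorithm~\ref{alg:d_wrs_deg4} every simplicial node is the apex of a stack as in Figure~\ref{fg:3k3s_dist_3_2}, and removing such a $v_1$ with empty excess introduces no new simplicial node, so taking $\epsilon(v_1)=\emptyset$ makes no progress and a PEK must take $\epsilon(v_1)=\{v_2v_4\}$, whence $G'$ is moral. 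To salvage your plan you would need to replace the ``unbreakable cycle'' heuristic with an argument that no PEK of $G-v_1$ can ever eliminate $v_2$, $v_4$ or $v_5$ in the Figure~\ref{fg:3k3s_dist_3_2} situation, leaning explicitly on $v_3$'s extra edges rather than only on the distance bound.
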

\begin{proof}
There is only one simplicial node in each $K_3^3$ and all simplicial nodes are in the same condition as $v_1$. Removing $v_1$ does not introduce new simplicial nodes in the subgraph. Hence, if $G$ is moral, $G-v_1-v_2v_4$ must be moral too. 
\end{proof}

The next lemma states how a long stack of $K_3^m$ can be shortened while morality is still preserved. The length of the stack is decreased by two at a time untill it becomes $1,2$ or $3$ that can be dealth with using prior rules. 
\begin{lemma}
Let $G=(V,E)$ be a moral graph with $\Delta(G)=4$. If $\exists x \in K_3^m \subset G$ for $m > 3$ with $D_G(x)=\emptyset$ and $d_G(x)=2$, then $G'=G-x-E(G[N_G(x)])$ is moral.
\end{lemma}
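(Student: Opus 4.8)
The plan is to read $K_3^m$ as a triangular strip whose triangles are $\{v_i,v_{i+1},v_{i+2}\}$, with $x=v_1$ the degree-$2$ end, so that $N_G(x)=\{v_2,v_3\}$ and the shared ``spine'' edge $v_2v_3$ is the only member of $E(G[N_G(x)])$; thus $G'=G-x-v_2v_3$. Since $m>3$, the vertex $v_3$ is degree-saturated ($N_G(v_3)=\{v_1,v_2,v_4,v_5\}$) whereas $v_2$ has strip-degree $3$. Removing $x$ and $v_2v_3$ destroys the first two triangles, leaves $v_2$ hanging off $v_4$, and leaves the strip $K_3^{m-2}$ on $\{v_3,\dots,v_{m+2}\}$ with new degree-$2$ end $v_3$; this realises the ``shorten by two'' reduction, after which the shorter stack is closed out by Corollary \ref{cor:1k3} and Lemmas \ref{lm:2k3s}--\ref{lm:3k3s_dist_3}.

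First I would reduce the statement to a single edge-deletion claim. As $x$ is simplicial in the moral graph $G$, Lemma \ref{lm:wrs_start_any_sim} provides a perfect elimination kit whose first step deletes $x$ together with an excess contained in $E(G[N_G(x)])=\{v_2v_3\}$; hence either $G-x-v_2v_3$ is moral (and we are finished) or $G-x$ is moral. It therefore suffices to prove that if $G-x$ is moral then so is $G'=G-x-v_2v_3$, i.e. that deleting the lone spine edge $v_2v_3$ from the moral graph $G-x$ preserves morality.

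To prove this implication I would peel the two triangles vertex by vertex. In $G-x$ the vertex $v_2$ is a degree-$2$ simplicial node with clique neighbourhood $\{v_3,v_4\}$, so Lemma \ref{lm:wrs_start_any_sim} applied at $v_2$ shows that one of $G-\{x,v_2\}$ or $G-\{x,v_2\}-v_3v_4$ is moral. In $G'$ the vertex $v_2$ is a pendant of $v_4$ (aside from a possible extra neighbour, treated below), so by the leaf equivalence -- a degree-$1$ vertex may be attached as a fresh sink to, or stripped from, a moral graph without changing morality -- $G'$ is moral iff $G-\{x,v_2\}$ is, which settles the first alternative. In the second alternative, deleting $v_3v_4$ turns $v_3$ into a pendant of $v_5$, so $G-\{x,v_2,v_3\}$ is moral; as its vertices $v_4,v_5$ stay adjacent, re-inserting $v_3$ as a simplicial node keeps the graph weakly recursively simplicial, hence moral by Theorem \ref{thm:wrs_equal_moral}, so $G-\{x,v_2\}$ and therefore $G'$ are moral. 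Either way $G'$ is moral.

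The hard part will be making the pendant and re-insertion steps watertight when the low-degree spine vertex $v_2$ carries an extra neighbour: for $m>3$ only $v_3$ is forced to be saturated, so $v_2$ may have one neighbour $z$ outside the strip, and then $v_2$ is not literally a pendant of $v_4$ in $G'$. I expect to dispose of this by invoking $\Delta(G)\le 4$ and the maximality of the stack to constrain the configuration around $z$, showing $z$ can always be eliminated before $v_2$, and by checking -- as in the subspace-of-orderings argument of Lemma \ref{lm:2k3s} -- that deleting the single chord $v_2v_3$ never dissolves a clique needed to keep some later node locally simplicial.
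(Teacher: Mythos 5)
Your opening reduction is sound: by Lemma \ref{lm:wrs_start_any_sim} applied at $x$, either $G-x$ or $G'=G-x-v_2v_3$ is moral, so the lemma reduces to showing that morality of $G-x$ forces morality of $G'$; and your leaf-peeling argument correctly settles the case where $v_2$ has no neighbour outside the stack. But the case you defer to your final paragraph is not a technicality to be ``disposed of'' later -- it is the entire content of the lemma -- and neither of your sketched repairs works. Since $m>3$ forces \emph{both} $v_3$ and $v_4$ to be saturated (not only $v_3$, as you claim), an outside neighbour $z$ of $v_2$ is adjacent to neither of them; consequently $v_2$ is not simplicial in $G-x$ and is not a pendant in $G'$, so Lemma \ref{lm:wrs_start_any_sim} cannot be applied at $v_2$ and the leaf equivalence never gets started. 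Repair (a), ``$z$ can always be eliminated before $v_2$,'' has no justification: $z$ is an arbitrary vertex whose local simpliciality depends on the rest of $G$, which the hypotheses leave completely unconstrained. Repair (b), that deleting $v_2v_3$ ``never dissolves a clique needed to keep some later node locally simplicial,'' is false as stated: $v_4$ is a common neighbour of $v_2$ and $v_3$, and a perfect elimination kit of $G-x$ may consume the stack from the far end ($v_{m+2},v_{m+1},\dots,v_5$) and then eliminate $v_4$ while $v_2,v_3$ are both still present, using exactly the edge $v_2v_3$ to certify $D(v_4)=\emptyset$. Note also that the subspace-of-orderings argument of Lemma \ref{lm:2k3s}, which you invoke for this point, runs in the wrong direction here: it shows that morality of the edge-\emph{deleted} graph implies morality of the edge-\emph{kept} graph, whereas you must pass from the edge-kept graph $G-x$ to the edge-deleted graph $G'$.

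What actually closes this case -- and what the paper's own (admittedly terse) proof rests on -- is the saturation of the interior of the stack: every vertex $v_3,\dots,v_m$ already has degree $\Delta(G)=4$ inside $K_3^m$, so nothing outside the stack attaches to them, and no cycle of $G$ leaving the stack can use an edge of the middle triangles. In particular the only common neighbours of $v_2$ and $v_3$ are $v_1$ and $v_4$, both internal, so the only eliminations of $G-x$ that ever exploit $v_2v_3$ are the far-end eliminations arriving at $v_4$ described above, and each of these can be rerouted into an elimination of $G'$ because $v_3$ is simplicial in $G'$ (its remaining neighbours $v_4,v_5$ are adjacent). Making that exchange step explicit is what your proof is missing; your leaf-peeling analysis can be retained verbatim for the $z$-free case.
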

\begin{proof}
For $m > 3$, only the two nodes on each end of a $K_3^m$ have degrees less than $\Delta(G)$. Hence, none of the 3-cliques in the middle of a $K_3^m$ shares an edge with a cycle in $G$, so $G'$ remains moral. 
\end{proof}

%Within degree $2$ simplicial nodes cases, it identifies a long stack of $K_3$s (if there is any) and reduces its length recursively. Each time a simplicial node and the edge between its neighbours are removed, the length of the stack decreases by $2$ and eventually reaches $1,2$ or $3$, depending on its original length. Stacks of different lengths are then dealt accordingly. It is worth mentioning that if $x \in K_3^3$, the cases have to be considered in a fixed order. Counter examples are shown in Figure \ref{fg:fixed_order_example}, if the fixed order is not obeyed. 

\begin{algorithm}[]
\caption{Checking morality for maximum degree $4$ graphs}
\label{alg:d_wrs_deg4}
\begin{algorithmic}[]
	\STATE {\bfseries Input:} graph $G=(V,E)$ s.t. $\Delta(G)=4$ 
    
    \IF{$\exists x$ s.t. $D_G(x)=\emptyset,d_G(x)=4$}
       \STATE {\bfseries return}{ T}
    \ENDIF
    
    \WHILE {$D(G)=\emptyset$} %\Comment{when $G$ has a simplicial node}
    
    \IF{$\exists x$ s.t. $D_G(x)=\emptyset, d_G(x)=1$}
       \STATE $G=G-x$
    \ELSIF{$\exists x$ s.t. $D_G(x)=\emptyset,d_G(x)=3$}
       \STATE $G=G-x-E(G[N_G(x)])$
    \ELSIF{$\exists x \in K_3^m$ s.t. $D_G(x)=\emptyset$ for $m \ge 4$} %\Comment{reduce long stack}
       \STATE $G=G-x-E(G[N_G(x)])$
    \ELSIF{$\exists x \in K_3^1$ s.t. $D_G(x)=\emptyset$}
       \STATE $G=G-x-E(G[N_G(x)])$
    \ELSIF{$\exists x \in K_3^2$ s.t. $D_G(x)=\emptyset$}
       \STATE $G=G-x$
    \ELSE %\Comment{all simplicial nodes are in $K_3^3$}
    	\IF{$d(v_4,v_5) \in \{2, \infty\}$ in $G-\{v_1,v_2,v_3\}-v_4v_5$} %\Comment{$H_1,H_2,H_3,H_5 \in G$}
    		\STATE $G=G-x-E(G[N_G(x)])$
    	\ELSE 
    		\IF{$\exists y \in K_3^3$ s.t. $D_G(y)=\emptyset$ and $|N_G(x)\cap N_G(y)|=1$} %\Comment{$H_{4a}\in G$}
       			\STATE $G=G-\{x,y\}$
   			\ELSE %\Comment{$H_{4b} \in G$}
       			\STATE $G=G-x-E(G[N_G(x)])$
       		\ENDIF
    	\ENDIF
    \ENDIF
    
    \ENDWHILE
    
    \IF{$G=\emptyset$}    
    	\STATE{\bfseries return}{ T} 
    \ELSE 
    	\STATE{\bfseries return}{ F}
    \ENDIF
    
\end{algorithmic}
\end{algorithm}

\begin{theorem}
\label{thm:deg4}
The morality of maximum degree $4$ graphs can be checked in polynomial time.
\end{theorem}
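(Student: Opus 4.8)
The plan is to prove the theorem exactly as Theorem \ref{thm:wrs_deg3} was proved: by establishing that Algorithm \ref{alg:d_wrs_deg4} correctly decides morality and runs in polynomial time. The algorithm is a greedy elimination procedure that, while a simplicial node exists, selects one according to a fixed priority on its degree and triangle-stack type, deletes it together with a prescribed excess (either $E(G[N_G(x)])$ or $\emptyset$), and returns T iff the process reaches the empty graph — with the special early return in the $d_G(x)=4$ case, where $G=K_5$ is complete and hence moral. I would split the argument into correctness of the decision and the polynomial bound.

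For correctness, soundness is the easy direction. Every reduction the algorithm performs has the form ``delete a simplicial node $x$ and an excess $E'\subseteq E(G[N_G(x)])$'', which is precisely one elimination step. Hence if the algorithm reaches the empty graph, the recorded sequence of pairs $(x,E')$ is a perfect elimination kit, so $G$ is weakly recursively simplicial and therefore moral by Theorems \ref{thm:wrs_has_pek} and \ref{thm:wrs_equal_moral}; consequently a non-moral input can never make the algorithm return T, so it returns F. For completeness I would induct on $|V|$: if $G$ is moral and non-empty it has a simplicial node, some branch of the algorithm therefore applies, and the corresponding case lemma guarantees the resulting graph $G'$ is again moral, so the recursion bottoms out at the empty graph. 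The engine licensing the fixed priority order is Lemma \ref{lm:wrs_start_any_sim}: a moral graph can be eliminated starting from any simplicial node, so we are always free to eliminate the highest-priority available one first.

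The step I expect to need the most care — the main obstacle — is verifying that the case split is exhaustive and that the priority is well defined, i.e.\ that every simplicial node lands in exactly one branch and that the branch's reduction is covered by a lemma. Since a degree-$1$ node is a leaf, a degree-$3$ node reduces to the degree-$2$ analysis of Lemma \ref{lm:wrs_deg3}, and a degree-$4$ simplicial node forces $G=K_5$ (all handled by Lemma \ref{lm:deg1_4_3}), the real work is the degree-$2$ case. Here $D_G(x)=\emptyset$ makes the two neighbours of $x$ adjacent, so $\{x\}\cup N_G(x)$ is a triangle and $x$ is the tip of a unique maximal stack $K_3^m$; the value of $m$ partitions the branch into $m\ge 4$, $m=1$ (Corollary \ref{cor:1k3}), $m=2$ (Lemma \ref{lm:2k3s}), and $m=3$. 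Within $m=3$ I would confirm the sub-cases are complete: the distance $d(v_4,v_5)$ in $G-\{v_1,v_2,v_3\}-v_4v_5$ is either in $\{2,\infty\}$ (Lemma \ref{lm:3k3s_dist_2_infty}) or else finite and at least $3$, in which case either a second simplicial node $y\in K_3^3$ with $|N_G(x)\cap N_G(y)|=1$ exists (the lemma accompanying Figure \ref{fg:3k3s_dist_3_1}) or it does not (Lemma \ref{lm:3k3s_dist_3}). Every branch thus lands in some morality-preserving lemma, which closes the induction.

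For the running time I would note that the \textbf{while} loop removes at least one vertex per iteration, so it executes at most $n=|V|$ times. Because $\Delta(G)=4$, each vertex has at most four neighbours, so testing $D_G(x)=\emptyset$, forming $E(G[N_G(x)])$, and identifying the maximal stack $K_3^m$ containing $x$ are all constant-depth local searches; the only global test, the distance $d(v_4,v_5)$, is a single shortest-path computation costing $O(|V|+|E|)=O(n)$ since $|E|\le 2n$. Each iteration is therefore polynomial, and the whole procedure runs in polynomial time, which completes the proof.
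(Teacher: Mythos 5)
Your proposal is correct and takes essentially the same route as the paper: it establishes correctness of Algorithm \ref{alg:d_wrs_deg4} by checking that the priority-ordered case split on simplicial nodes is exhaustive and that each branch's reduction is licensed by the corresponding morality-preserving lemma (Lemma \ref{lm:deg1_4_3}, Corollary \ref{cor:1k3}, Lemmas \ref{lm:2k3s}--\ref{lm:3k3s_dist_3} and the stack lemmas), then bounds the loop by a polynomial per-iteration cost. You actually spell out more than the paper does --- the soundness direction (reaching the empty graph yields a perfect elimination kit, hence morality via Theorems \ref{thm:wrs_has_pek} and \ref{thm:wrs_equal_moral}) and the induction for completeness --- where the paper compresses correctness into a one-line appeal to ``the above lemmas and corollary.''
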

\begin{proof}
The correctness of Algorithm \ref{alg:d_wrs_deg4} can be proved by the above lemmas and corollary. 

The complexity of this algorithm is mainly determined by identifying simplicial nodes in different scenarios. The worst case is the identification of a simplicial node in a long $K_3^m$. This, however, is still bounded in polynomial time, because once a $K_3^m$ is confirmed to have length greater than $3$, the actual length does not matter anymore. If a $K_3^3$ is matched, $d(v_4,v_5$) can be calculated in $O(n^2)$ time (using Dijkstra's
algorithm). The rest of the operations can all be done in polynomial time. Hence, the algorithm has a polynomial time complexity. 
\end{proof}

As mentioned earlier, a moral graph's simplicial nodes need to be removed in a fixed order in order for it to be completely eliminated. Figure \ref{fg:fixed_order_example} shows two examples of moral graphs that cannot be completely eliminated if simplicial nodes are removed in a different order. 
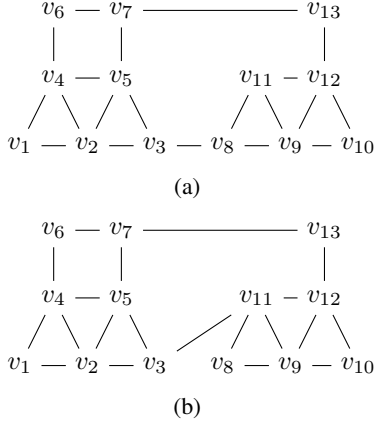
\begin{figure}
\centering
\subfigure[]{
\label{fg:fixed_order_example_1}
\begin{tikzpicture}[scale=0.9]
%\tikzstyle{place}=[circle,draw=blue!50,fill=blue!20,thick,inner sep=0pt,minimum size=6mm]
\begin{scope}              
    \node (A) at (-0.5,-2) {$v_1$};
    \node (B) at (1.5,-2) {$v_3$};
    \node (H) at (0,-1) {$v_4$};
    \node (I) at (1,-1) {$v_5$};
    \node (J) at (0.5,-2) {$v_2$};  
    \node (C) at (0,0) {$v_6$};
    \node (D) at (1,0) {$v_7$};    
    \path [-] (A) edge (J);
    \path [-] (A) edge (H);
    \path [-] (B) edge (I);
    \path [-] (B) edge (J);
    \path [-] (H) edge (I);
    \path [-] (H) edge (J);
    \path [-] (I) edge (J); 
    \path [-] (H) edge (C);
    \path [-] (C) edge (D);
    \path [-] (I) edge (D);
    
    \node (a) at (-0.5+3,-2) {$v_8$};
    \node (b) at (1.5+3,-2) {$v_{10}$};
    \node (h) at (0+3,-1) {$v_{11}$};
    \node (i) at (1+3,-1) {$v_{12}$};
    \node (j) at (0.5+3,-2) {$v_9$};
    \node (d) at (1+3,0) {$v_{13}$};
    \path [-] (a) edge (j);
    \path [-] (a) edge (h);
    \path [-] (b) edge (i);
    \path [-] (b) edge (j);
    \path [-] (h) edge (i);
    \path [-] (h) edge (j);
    \path [-] (i) edge (j);
    
    \path [-] (a) edge (B);
    \path [-] (D) edge (d);
    \path [-] (d) edge (i);
\end{scope}
\end{tikzpicture}
}
\subfigure[]{
\label{fg:fixed_order_example_2}	
\begin{tikzpicture}[scale=0.9]
\begin{scope}              
    \node (A) at (-0.5+6.5,-2) {$v_1$};                                                                                                                                                                                                                       
    \node (B) at (1.5+6.5,-2) {$v_3$};
    \node (H) at (0+6.5,-1) {$v_4$};
    \node (I) at (1+6.5,-1) {$v_5$};
    \node (J) at (0.5+6.5,-2) {$v_2$};  
    \node (C) at (0+6.5,0) {$v_6$};
    \node (D) at (1+6.5,0) {$v_7$};    
    \path [-] (A) edge (J);
    \path [-] (A) edge (H);
    \path [-] (B) edge (I);
    \path [-] (B) edge (J);
    \path [-] (H) edge (I);
    \path [-] (H) edge (J);
    \path [-] (I) edge (J); 
    \path [-] (H) edge (C);
    \path [-] (C) edge (D);
    \path [-] (I) edge (D);
    
    \node (a) at (-0.5+3+6.5,-2) {$v_8$};
    \node (b) at (1.5+3+6.5,-2) {$v_{10}$};
    \node (h) at (0+3+6.5,-1) {$v_{11}$};
    \node (i) at (1+3+6.5,-1) {$v_{12}$};
    \node (j) at (0.5+3+6.5,-2) {$v_9$};
    \node (d) at (1+3+6.5,0) {$v_{13}$};
    \path [-] (a) edge (j);
    \path [-] (a) edge (h);
    \path [-] (b) edge (i);
    \path [-] (b) edge (j);
    \path [-] (h) edge (i);
    \path [-] (h) edge (j);
    \path [-] (i) edge (j);
    
    \path [-] (h) edge (B);
    \path [-] (D) edge (d);
    \path [-] (d) edge (i);
\end{scope}
\end{tikzpicture}
}
\caption{Two maximum degree $4$ moral graphs with simplicial nodes in $K^3_3$. According to Algorithm \ref{alg:d_wrs_deg4}, in \ref{fg:fixed_order_example_1} $\{v_{10}, v_9v_{12}\}$ are removed before $\{v_{1}, v_2v_4\}$; in \ref{fg:fixed_order_example_2} $\{v_8,v_{10}\}$ are removed before $\{v_1,v_2v_4\}$. If the order is not followed, these graphs will not be recognized as moral by Algorithm \ref{alg:d_wrs_deg4}.}
\label{fg:fixed_order_example}
\end{figure}

To this point, we have proved that for graphs with maximum degree $3$
and $4$, their morality can be checked in polynomial time. The next
theorem proves that the problem remains NP-complete for graphs with
maximum degree $5$, and hence the same for graphs with even higher
maximum degrees.
\begin{theorem}
\label{thm:deg5}
The problem of checking morality for maximum degree $5$ graphs is NP-complete. 
\end{theorem}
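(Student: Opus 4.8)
The plan is to prove NP-completeness in two parts: membership in NP and NP-hardness via a degree-bounded reduction from 3-SAT. For membership, I would use the fact that a graph is moral if and only if it possesses a perfect elimination kit (Theorem \ref{thm:wrs_has_pek} together with Theorem \ref{thm:wrs_equal_moral}), and that such a kit is a polynomial-size certificate: an ordering $\alpha$ of the $n$ vertices together with, for each vertex, an excess consisting of at most $O(\Delta^2)$ edges among its neighbours. Given this certificate, one verifies in polynomial time that each $\alpha(i)$ is simplicial in the eliminated graph $G^{i-1}$ before its removal. Since verification is polynomial and the certificate has polynomial size (and this remains true when $\Delta \le 5$), checking morality lies in NP.

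For hardness, I would reduce from 3-SAT, adapting the construction of \citet{verma1993deciding} so that every vertex of the output graph has degree at most $5$. The guiding idea, inherited from their reduction, is to encode a truth assignment of a formula $\phi$ as a choice of excesses during elimination: each variable gadget contains a degree-$5$ simplicial node at which exactly two admissible excesses are ``meaningful'', corresponding to setting the variable true or false, and each clause gadget can be completely eliminated precisely when at least one of its literals has been set true. Then $\phi$ is satisfiable if and only if the assembled graph $G_\phi$ admits a perfect elimination kit, i.e. is moral. Degree $5$ is essential here: the results for $\Delta \le 4$ (Theorem \ref{thm:deg4}) show that below this threshold the excess choices can be resolved greedily, so a genuine, non-anticipatable choice --- exactly the phenomenon noted at the start of this section, where removing an edge can destroy simpliciality in some later step --- first becomes available at degree $5$, and it is this choice that carries the combinatorial hardness.

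The modification that keeps the degree bounded is the crux. In the original reduction a single variable node is shared among all clauses in which the variable occurs, so its degree grows with the number of occurrences; I would replace this by a bounded-degree propagation structure. Concretely, each variable's truth value is generated once at a degree-$5$ choice node and then copied along a path (or tree) of degree-bounded \emph{duplication} gadgets, each forwarding the chosen value to one clause occurrence while using at most five incident edges per vertex. The clause gadget must be designed to interface with three incoming literal lines and be eliminable iff at least one of them arrives in the ``true'' state, again without exceeding degree $5$ at any vertex. Since 3-SAT remains NP-hard under a bounded number of occurrences per variable, this fan-out can be realized with constant branching, and $G_\phi$ is built in polynomial time.

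The hard part will be proving correctness of the gadgets under the degree constraint, in both directions. For soundness I must show that if $\phi$ is unsatisfiable then \emph{no} admissible sequence of excesses eliminates $G_\phi$: blocking only the ``intended'' orders is insufficient, since the elimination process is free to interleave gadgets and to pick any excess at any degree-$5$ node, so I would invoke the style of argument used in Lemmas \ref{lm:no_common_nbr}--\ref{lm:1kp} to show that any deviation from a value-consistent elimination leaves some vertex permanently non-simplicial. For completeness I must exhibit, from any satisfying assignment, an explicit perfect elimination kit, checking that the duplication gadgets carry a single consistent value and that each clause collapses once a true literal reaches it. Verifying that the copy gadgets cannot ``relabel'' a variable between occurrences --- that the degree-$5$ bound does not accidentally permit an inconsistent assignment to still eliminate the graph --- is where I expect the most delicate case analysis.
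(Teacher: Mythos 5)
Your high-level strategy coincides with the paper's: membership in NP via a perfect elimination kit as a polynomial certificate (a point the paper leaves implicit), and hardness by modifying \citet{verma1993deciding}'s reduction so that the fan-out from each variable to its clause occurrences is spread over bounded-degree structure instead of concentrated at single nodes. But as written your proposal has a genuine gap: the entire technical content of the theorem --- the gadgets and their two-directional correctness --- is deferred. You postulate a ``duplication gadget'' that copies a truth value under elimination semantics without ever exhibiting it, and you yourself flag its correctness as the part requiring delicate case analysis. For an NP-hardness result the construction \emph{is} the proof; a plan that hypothesizes gadgets with the required properties does not yet establish anything. The paper sidesteps the need for any new gadget: it keeps Verma and Pearl's variable and clause gadgets verbatim and realizes the fan-out by daisy-chaining, connecting the literal node $F_k^l$ of a later clause to the nodes $F_p^q$ and $F_p^{q+3}$ of the most recent clause gadget that used the same literal (steps 4--5 of its construction), so the ``copies'' of a variable's value are carried by clause gadgets already present.

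Second, variable fan-out is not the only source of unbounded degree in the original reduction, and your proposal never mentions the other one: Verma and Pearl's auxiliary node $S^7$ is adjacent to every one of the $t$ clause gadgets, hence has degree $\Theta(t)$ regardless of how variables are wired. The paper replaces $S^7$ by a chain $S_1^7,\dots,S_t^7$ with one node per clause; without an analogous fix your graph $G_\phi$ would violate $\Delta(G_\phi)\le 5$ no matter how well your duplication gadgets work. A minor further caution on your fallback to bounded-occurrence 3-SAT: instances in which every clause has exactly three distinct variables and every variable occurs in at most three clauses are \emph{always} satisfiable (Tovey), so you would need to allow up to four occurrences per variable --- or simply rely on your propagation chain, which makes the bounded-occurrence assumption unnecessary.
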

The theorem can be proved by modifying \citet{verma1993deciding}'s construction to build graphs with max degree $5$. 

%Given a DAG $G=(V,E)$ and an undirected graph $H=(V,F)$, it takes polynomial time to get the moral graph $G'$ of $G$ and verify whether or not $G'=H$. Hence, the problem of checking morality is NP. To prove the NP-completeness, we draw a polynomial time reduction from the 3-CNF problem that is known to be NP-complete to the graph morality problem, and show that the two problems can be solved equivalently. The polynomial time reduction in this paper is constructed based on that in \cite{verma1993deciding}, but with extra restrictions to avoid arbitrary high degree nodes. The reduction is based on gadgets that are designed specifically to simulate the behaviour of variables and clauses. 

\begin{proof}
Given a 3-CNF problem with $n$ variables and $t$ clauses, our
construction will build a graph with $32n+23t+7$ vertices, which are
made of $32$ vertices in each of the $n$ variable gadgets, $22$
vertices in each of the $t$ clause gadgets and $7+t$ vertices in the
auxiliary gadget. The variable (Figure \ref{fg:variable_gedget}) and
clause (Figure \ref{fg:clause_gedget}) gadgets are identical to those
used by \citet{verma1993deciding}, but the auxiliary gadget (Figure
\ref{fg:aux_gedget}) now consists of a chain of length $t+2$, each of $S_i^7$ in which connects to a clause gadget. This avoids having a single node $S^7$ connects to all clause gadgets, which results in high node degree as
appeared in Figure $4$ in \cite{verma1993deciding}. 

The gadgets are connected together to form a single component in the following ways: 
\begin{enumerate}
\item all the variable gadgets are connected together by the edges $\bar{v}_i^0v_{i+1}^0$ for $i \in [1, n-1]$,
\item the variable gadgets are connected to the auxiliary gadget by $S^0v_1^0$ and $S^5\bar{v}_n^0$,
\item the clause gadgets are connected to the auxiliary gadget by $S_i^7F_i^{21}$ for $i \in [1, t]$,
\item for the $k^{th}$ clause, if its $(l+1)^{th}$ literal is the variable $v_i$ for $l \in [0, 2]$, 
\begin{enumerate}
\item if $d(v_i^{15})=1$, then $F_k^l$ is connected to $v_i^{15}$,
\item else $F_k^l$ is connected to $F_p^q$ and $F_p^{q+3}$ for the last $F_p^q$ that was (directly or indirectly) connected to $v_i^{15}$,
\end{enumerate}
\item if the $(l+1)^{th}$ literal in the $k^{th}$ clause is $\bar{v}_i$, replacing $v_i^{15}$ by $\bar{v}_i^{15}$ in step $4$.
\end{enumerate}
Steps $1-3$ are identical to those in \cite{verma1993deciding}. Steps $4$ and $5$ are different in order to avoid high degree nodes  $v_i^{15}$ and  $\bar{v}_1^{15}$. Figure \ref{fg:3cnf_fully_dir} is an example of a construction from a satisfiable 3-CNF. The reduction from 3-CNF to graph morality is polynomial. It remains to show that the two problems are equivalent.

The above construction ensures the final graph will always have simplicial nodes $\{v_i^7, \bar{v}_i^7, v_i^9,\bar{v}_i^9\}$. For the graph to be maximally eliminated for any ordering $\alpha$, either $(i)$ $\epsilon_{\alpha}(v_i^7)=v_i^8v_i^{10}$, $\epsilon_{\alpha}(v_i^9)=v_i^8v_i^{11}$, $\epsilon_{\alpha}(\bar{v}_i^7)=\epsilon_{\alpha}(\bar{v}_i^9)=\emptyset$ or $(ii)$ $\epsilon_{\alpha}(\bar{v}_i^7)=\bar{v}_i^8\bar{v}_i^{10}$, $\epsilon_{\alpha}(\bar{v}_i^9)=\bar{v}_i^8\bar{v}_i^{11}$, $\epsilon_{\alpha}(v_i^7)=\epsilon_{\alpha}(v_i^9)=\emptyset$. Hence, a variable is assigned T or F according to how its corresponding gadget is eliminated according to these two choices. 

If the graph is not moral, there is a clause gadget that cannot be eliminated because no elimination can get to it through any variable gadget. Therefore, no matter how the variables are assigned, this clause only returns F in the expression, so the 3-CNF is not satisfiable. If the graph is moral, assigning F if a variable gadget is eliminated by excess $(i)$ and T otherwise. Therefore, each clause gadget contains a true literal, so the 3-CNF is satisfiable. 
\end{proof}

% variable gadget
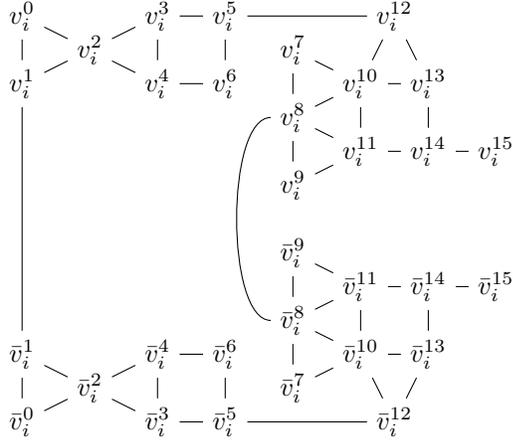
\begin{figure}
\centering
\begin{tikzpicture}[scale=0.9]
%\tikzstyle{place}=[circle,draw=blue!50,fill=blue!20,thick,inner sep=0pt,minimum size=6mm]
%\begin{scope}[every node/.style={circle,thick,draw,minimum size=3mm}]
\begin{scope}
    \node (A) at (0,3.5) {$v_i^0$};
    \node (B) at (0,2.5) {$v_i^1$};
    \node (C) at (1,3) {$v_i^2$};
    \node (D) at (2,3.5) {$v_i^3$};
    \node (E) at (2,2.5) {$v_i^4$};
    \node (F) at (3,3.5) {$v_i^5$};
    \node (G) at (3,2.5) {$v_i^6$};
    \node (H) at (4,3) {$v_i^7$};
    \node (I) at (4,2) {$v_i^8$};
    \node (J) at (4,1) {$v_i^9$};
    \node (K) at (5,2.5) {$v_i^{10}$};
    \node (L) at (5,1.5) {$v_i^{11}$};
    \node (M) at (5.5,3.5) {$v_i^{12}$};
    \node (N) at (6,2.5) {$v_i^{13}$};
    \node (O) at (6,1.5) {$v_i^{14}$};
    \node (P) at (7,1.5) {$v_i^{15}$};
    
    \node (a) at (0,-2.5) {$\bar{v}_i^0$};
    \node (b) at (0,-1.5) {$\bar{v}_i^1$};
    \node (c) at (1,-2) {$\bar{v}_i^2$};
    \node (d) at (2,-2.5) {$\bar{v}_i^3$};
    \node (e) at (2,-1.5) {$\bar{v}_i^4$};
    \node (f) at (3,-2.5) {$\bar{v}_i^5$};
    \node (g) at (3,-1.5) {$\bar{v}_i^6$};
    \node (h) at (4,-2) {$\bar{v}_i^7$};
    \node (i) at (4,-1) {$\bar{v}_i^8$};
    \node (j) at (4,0) {$\bar{v}_i^9$};
    \node (k) at (5,-1.5) {$\bar{v}_i^{10}$};
    \node (l) at (5,-0.5) {$\bar{v}_i^{11}$};
    \node (m) at (5.5,-2.5) {$\bar{v}_i^{12}$};
    \node (n) at (6,-1.5) {$\bar{v}_i^{13}$};
    \node (o) at (6,-0.5) {$\bar{v}_i^{14}$};
    \node (p) at (7,-0.5) {$\bar{v}_i^{15}$};
\end{scope}

\begin{scope}[>={Stealth[black]},
              every edge/.style={draw=black}]
    \path [-] (B) edge (b);
    \path [-] (A) edge (B);
    \path [-] (A) edge (C);
    \path [-] (B) edge (C);
    \path [-] (C) edge (D);
    \path [-] (C) edge (E);
    \path [-] (D) edge (E);
    \path [-] (D) edge (F);
    \path [-] (E) edge (G);
    \path [-] (F) edge (G);
    \path [-] (F) edge (M);
    \path [-] (H) edge (I);
    \path [-] (H) edge (K);
    \path [-] (K) edge (I);
    \path [-] (I) edge (J);
    \path [-] (I) edge (L);
    \path [-] (L) edge (J);
    \path [-] (K) edge (M);
    \path [-] (K) edge (N);
    \path [-] (K) edge (L);
    \path [-] (M) edge (N);
    \path [-] (N) edge (O);
    \path [-] (L) edge (O);
    \path [-] (O) edge (P);
    
    \path [-] (a) edge (b);
    \path [-] (a) edge (c);
    \path [-] (b) edge (c);
    \path [-] (c) edge (d);
    \path [-] (c) edge (e);
    \path [-] (d) edge (e);
    \path [-] (d) edge (f);
    \path [-] (e) edge (g);
    \path [-] (f) edge (g);
    \path [-] (f) edge (m);
    \path [-] (h) edge (i);
    \path [-] (h) edge (k);
    \path [-] (k) edge (i);
    \path [-] (i) edge (j);
    \path [-] (i) edge (l);
    \path [-] (l) edge (j);
    \path [-] (k) edge (m);
    \path [-] (k) edge (n);
    \path [-] (k) edge (l);
    \path [-] (m) edge (n);
    \path [-] (n) edge (o);
    \path [-] (l) edge (o);
    \path [-] (o) edge (p);
    
    %\path [-] (I) edge (i);
\end{scope}
\draw[-] (I) .. controls (3,2) and (3,-1) .. (i);
\end{tikzpicture}
\caption{A gadget that simulates the behaviour of a boolean variable $v_i$. It consists of two symmetric parts, $v_i$ (top) and $\bar{v}_i$ (bottom) that are connected by the edge $v_i^8\bar{v}_i^8$. This single edge guarantees non-identical excesses $\epsilon_{\alpha}(v_i^j)\neq \epsilon_{\alpha}(\bar{v}_i^j)$ for some $j \in [0,15]$ for any ordering $\alpha$ of the gadget, so the two parts must be oriented differently to distinguish between T and F.}
\label{fg:variable_gedget}
\end{figure}

% clause gadget
\begin{figure}
\centering
\begin{tikzpicture}[scale=0.9]
%\tikzstyle{place}=[circle,draw=blue!50,fill=blue!20,thick,inner sep=0pt,minimum size=6mm]
%\begin{scope}[every node/.style={circle,thick,draw,minimum size=3mm}]
\begin{scope}
    \node (a) at (0,4.5) {$F_i^0$};
    \node (b) at (0,2.5) {$F_i^1$};   
    \node (c) at (0,0.5) {$F_i^2$};
    \node (d) at (1,4.5) {$F_i^3$};
    \node (e) at (1,2.5) {$F_i^4$};
    \node (f) at (1,0.5) {$F_i^5$};
    \node (g) at (2,5) {$F_i^6$};
    \node (h) at (2,4) {$F_i^7$};
    \node (i) at (2,3) {$F_i^8$};
    \node (j) at (2,2) {$F_i^9$};
    \node (k) at (2,1) {$F_i^{10}$};
    \node (l) at (2,0) {$F_i^{11}$};
    \node (m) at (3,5) {$F_i^{12}$};
    \node (n) at (3,4) {$F_i^{13}$};
    \node (o) at (3,3) {$F_i^{14}$};
    \node (p) at (3,2) {$F_i^{15}$};
    \node (q) at (3,1) {$F_i^{16}$};
    \node (r) at (3,0) {$F_i^{17}$};
    \node (s) at (4,3) {$F_i^{19}$};
    \node (t) at (6,5) {$F_i^{18}$};
    \node (u) at (6,1) {$F_i^{20}$};
    \node (v) at (8,3) {$F_i^{21}$};
\end{scope}

\begin{scope}[>={Stealth[black]},every edge/.style={draw=black}]
    \path [-] (a) edge (d);
    \path [-] (d) edge (g);
    \path [-] (d) edge (h);
    \path [-] (g) edge (h);
    \path [-] (g) edge (m);
    \path [-] (h) edge (n);
    \path [-] (m) edge (n);
    \path [-] (m) edge (t);
    
    \path [-] (b) edge (e);
    \path [-] (i) edge (e);
    \path [-] (j) edge (e);
    \path [-] (i) edge (j);
    \path [-] (i) edge (o);
    \path [-] (j) edge (p);
    \path [-] (o) edge (p);
    \path [-] (o) edge (s);
    \path [-] (s) edge (u);
    \path [-] (s) edge (v);
    \path [-] (v) edge (u);
    \path [-] (t) edge (u);
    \path [-] (t) edge (v);
    \path [-] (t) edge (s);
    
    \path [-] (c) edge (f);
    \path [-] (f) edge (k);
    \path [-] (f) edge (l);
    \path [-] (k) edge (l);
    \path [-] (k) edge (q);
    \path [-] (l) edge (r);
    \path [-] (q) edge (r);
    \path [-] (q) edge (u);
\end{scope}
\end{tikzpicture}
\caption{A gadget that simulates a clause $F_i$'s disjunction. It consists of a $K_4$ and three envelope graphs, one for each literal in $F_i$. One of $F_i^{18},F_i^{19}$ and $F_i^{20}$ can be locally simplicial, depending on whether or not its adjacent envelope graph corresponds to a T literal in $F_i$.}
\label{fg:clause_gedget}
\end{figure}
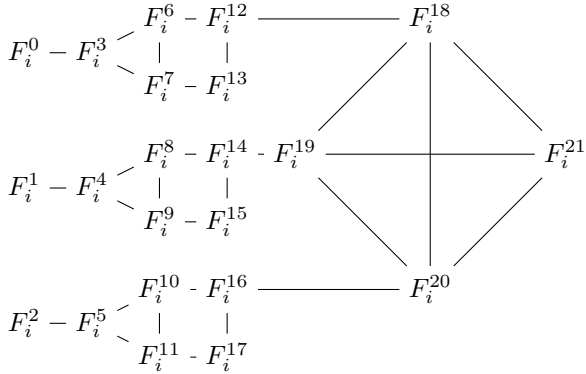

% auxiliary gadget
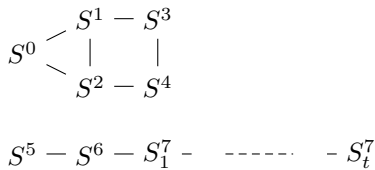
\begin{figure}
\centering
\begin{tikzpicture}[scale=0.9]
\begin{scope}
    \node (a) at (1,4.5) {$S^0$};
    \node (b) at (2,5) {$S^1$};   
    \node (c) at (2,4) {$S^2$};
    \node (d) at (3,5) {$S^3$};
    \node (e) at (3,4) {$S^4$};
    
    \node (f) at (1,3) {$S^5$};
    \node (g) at (2,3) {$S^6$};
    \node (h) at (3,3) {$S^7_1$};
    \node (i) at (6,3) {$S^7_t$};
    
    \path [-] (a) edge (b);
    \path [-] (a) edge (c);
    \path [-] (b) edge (c);
    \path [-] (b) edge (d);
    \path [-] (c) edge (e);
    \path [-] (d) edge (e);
    \path [-] (f) edge (g);
	\path [-] (h) edge (g);
	\path [-] (h) edge (3.5,3);
	\path [-] (5.5,3) edge (i);
	\path [densely dashed] (4,3) edge (5,3);
\end{scope}
\end{tikzpicture}
\caption{The auxiliary gadget consists of two parts. The chain connects the variable and clause gadgets to form a connected graph. The envelope graph is positioned on one side of the variable gadgets to enforce a certain direction.}
\label{fg:aux_gedget}
\end{figure}

\section{Conclusion}
In this paper, we have drawn a connection between checking Markov
blanket consistency and graph morality. We proved that being moral is
equivalent to being weakly recursively simplicial as well as having a
perfect elimination kit. We have also proved that checking morality
for maximum degree $3$ and $4$ graphs can be done in polynomial time,
but the problem remains NP-complete for graphs with higher maximum
degrees.

It is future work to develop an efficient way of enforcing
morality. This may produce a set of consistent Markov blankets that
can help with the performance of structure learning methods building
on Markov blankets. Another interesting possibility is
\textit{immoralizing} a moral graph to obtain a consistent DAG. This
could unify a (symmetric and consistent) set of Markov blankets to
obtain a DAG, one which may not be the generating model, but could be
used as a starting for heuristic structure learners.

% fully directed 3-cnf 
\begin{figure}
\centering
\begin{tikzpicture}[scale=0.75]
% variable gedget
\begin{scope}[every node/.style={circle,draw,fill=black,minimum size=1mm,inner sep=1pt},>={Stealth[black]}]
    \node (A) at (0,0.5) {};
    \node (B) at (0,1) {};
    \node (C) at (0.5,0.75) {};
    \node (D) at (1,0.5) {};
    \node (E) at (1,1) {};
    \node (F) at (1.5,0.5) {};
    \node (G) at (1.5,1) {};
    \node (H) at (2,0.75) {};
    \node (I) at (2,1.25) {};
    \node (J) at (2,1.75) {};
    \node (K) at (2.5,1) {};
    \node (L) at (2.5,1.5) {};
    \node (M) at (2.75,0.5) {};
    \node (N) at (3,1) {};
    \node (O) at (3,1.5) {};
    \node (P) at (3.5,1.5) {};
    \node (a) at (0,3.5) {};
    \node (b) at (0,3) {};
    \node (c) at (0.5,3.25) {};
    \node (d) at (1,3.5) {};
    \node (e) at (1,3) {};
    \node (f) at (1.5,3.5) {};
    \node (g) at (1.5,3) {};
    \node (h) at (2,3.25) {};
    \node (i) at (2,2.75) {};
    \node (j) at (2,2.25) {};
    \node (k) at (2.5,3) {};
    \node (l) at (2.5,2.5) {};
    \node (m) at (2.75,3.5) {};
    \node (n) at (3,3) {};
    \node (o) at (3,2.5) {};
    \node (p) at (3.5,2.5) {};
    
    \node (A1) at (0,0.5+3.5) {};
    \node (B1) at (0,1+3.5) {};
    \node (C1) at (0.5,0.75+3.5) {};
    \node (D1) at (1,0.5+3.5) {};
    \node (E1) at (1,1+3.5) {};
    \node (F1) at (1.5,0.5+3.5) {};
    \node (G1) at (1.5,1+3.5) {};
    \node (H1) at (2,0.75+3.5) {};
    \node (I1) at (2,1.25+3.5) {};
    \node (J1) at (2,1.75+3.5) {};
    \node (K1) at (2.5,1+3.5) {};
    \node (L1) at (2.5,1.5+3.5) {};
    \node (M1) at (2.75,0.5+3.5) {};
    \node (N1) at (3,1+3.5) {};
    \node (O1) at (3,1.5+3.5) {};
    \node (P1) at (3.5,1.5+3.5) {};
    \node (a1) at (0,3.5+3.5) {};
    \node (b1) at (0,3+3.5) {};
    \node (c1) at (0.5,3.25+3.5) {};
    \node (d1) at (1,3.5+3.5) {};
    \node (e1) at (1,3+3.5) {};
    \node (f1) at (1.5,3.5+3.5) {};
    \node (g1) at (1.5,3+3.5) {};
    \node (h1) at (2,3.25+3.5) {};
    \node (i1) at (2,2.75+3.5) {};
    \node (j1) at (2,2.25+3.5) {};
    \node (k1) at (2.5,3+3.5) {};
    \node (l1) at (2.5,2.5+3.5) {};
    \node (m1) at (2.75,3.5+3.5) {};
    \node (n1) at (3,3+3.5) {};
    \node (o1) at (3,2.5+3.5) {};
    \node (p1) at (3.5,2.5+3.5) {};
    
    \node (A2) at (0,0.5+7) {};
    \node (B2) at (0,1+7) {};
    \node (C2) at (0.5,0.75+7) {};
    \node (D2) at (1,0.5+7) {};
    \node (E2) at (1,1+7) {};
    \node (F2) at (1.5,0.5+7) {};
    \node (G2) at (1.5,1+7) {};
    \node (H2) at (2,0.75+7) {};
    \node (I2) at (2,1.25+7) {};
    \node (J2) at (2,1.75+7) {};
    \node (K2) at (2.5,1+7) {};
    \node (L2) at (2.5,1.5+7) {};
    \node (M2) at (2.75,0.5+7) {};
    \node (N2) at (3,1+7) {};
    \node (O2) at (3,1.5+7) {};
    \node (P2) at (3.5,1.5+7) {};
    \node (a2) at (0,3.5+7) {};
    \node (b2) at (0,3+7) {};
    \node (c2) at (0.5,3.25+7) {};
    \node (d2) at (1,3.5+7) {};
    \node (e2) at (1,3+7) {};
    \node (f2) at (1.5,3.5+7) {};
    \node (g2) at (1.5,3+7) {};
    \node (h2) at (2,3.25+7) {};
    \node (i2) at (2,2.75+7) {};
    \node (j2) at (2,2.25+7) {};
    \node (k2) at (2.5,3+7) {};
    \node (l2) at (2.5,2.5+7) {};
    \node (m2) at (2.75,3.5+7) {};
    \node (n2) at (3,3+7) {};
    \node (o2) at (3,2.5+7) {};
    \node (p2) at (3.5,2.5+7) {};

    \path [-] (B) edge (b);
    \path [-] (B) edge (A);
    \path [-] (C) edge (A);
    \path [-] (B) edge (C);
    \path [-] (D) edge (C);
    \path [-] (E) edge (C);
    \path [-] (D) edge (E);
    \path [-] (D) edge (F);
    \path [-] (E) edge (G);
    \path [-] (F) edge (G);
    \path [-] (F) edge (M);
    \path [-] (H) edge (I);
    \path [-] (H) edge (K);
    \path [-] (K) edge (I);
    \path [-] (I) edge (J);
    \path [-] (I) edge (L);
    \path [-] (L) edge (J);
    \path [-] (K) edge (M);
    \path [-] (K) edge (N);
    \path [-] (K) edge (L);
    \path [-] (M) edge (N);
    \path [-] (O) edge (N);
    \path [-] (O) edge (L);
    \path [-] (P) edge (O);
    \path [-] (a) edge (b);
    \path [-] (a) edge (c);
    \path [-] (c) edge (b);
    \path [-] (d) edge (c);
    \path [-] (e) edge (c);
    \path [-] (d) edge (e);
    \path [-] (d) edge (f);
    \path [-] (e) edge (g);
    \path [-] (f) edge (g);
    \path [-] (f) edge (m);
    \path [-] (h) edge (i);
    \path [-] (h) edge (k);
    \path [-] (k) edge (i);
    \path [-] (i) edge (j);
    \path [-] (i) edge (l);
    \path [-] (l) edge (j);
    \path [-] (k) edge (m);
    \path [-] (k) edge (n);
    \path [-] (k) edge (l);
    \path [-] (m) edge (n);
    \path [-] (o) edge (n);
    \path [-] (o) edge (l);
    \path [-] (p) edge (o);
    \draw[-] (I) .. controls (1.5,1.25) and (1.5,2.75) .. (i);
    
    \path [-] (B1) edge (b1);
    \path [-] (B1) edge (A1);
    \path [-] (C1) edge (A1);
    \path [-] (B1) edge (C1);
    \path [-] (D1) edge (C1);
    \path [-] (E1) edge (C1);
    \path [-] (D1) edge (E1);
    \path [-] (D1) edge (F1);
    \path [-] (E1) edge (G1);
    \path [-] (F1) edge (G1);
    \path [-] (F1) edge (M1);
    \path [-] (H1) edge (I1);
    \path [-] (H1) edge (K1);
    \path [-] (K1) edge (I1);
    \path [-] (I1) edge (J1);
    \path [-] (I1) edge (L1);
    \path [-] (L1) edge (J1);
    \path [-] (K1) edge (M1);
    \path [-] (K1) edge (N1);
    \path [-] (K1) edge (L1);
    \path [-] (M1) edge (N1);
    \path [-] (O1) edge (N1);
    \path [-] (O1) edge (L1);
    \path [-] (P1) edge (O1);
    \path [-] (a1) edge (b1);
    \path [-] (a1) edge (c1);
    \path [-] (c1) edge (b1);
    \path [-] (d1) edge (c1);
    \path [-] (e1) edge (c1);
    \path [-] (d1) edge (e1);
    \path [-] (d1) edge (f1);
    \path [-] (e1) edge (g1);
    \path [-] (f1) edge (g1);
    \path [-] (f1) edge (m1);
    \path [-] (h1) edge (i1);
    \path [-] (h1) edge (k1);
    \path [-] (k1) edge (i1);
    \path [-] (i1) edge (j1);
    \path [-] (i1) edge (l1);
    \path [-] (l1) edge (j1);
    \path [-] (k1) edge (m1);
    \path [-] (k1) edge (n1);
    \path [-] (k1) edge (l1);
    \path [-] (m1) edge (n1);
    \path [-] (o1) edge (n1);
    \path [-] (o1) edge (l1);
    \path [-] (p1) edge (o1);
    \draw[-] (I1) .. controls (1.5,1.25+3.5) and (1.5,2.75+3.5) .. (i1);
    
    \path [-] (B2) edge (b2);
    \path [-] (B2) edge (A2);
    \path [-] (C2) edge (A2);
    \path [-] (B2) edge (C2);
    \path [-] (D2) edge (C2);
    \path [-] (E2) edge (C2);
    \path [-] (D2) edge (E2);
    \path [-] (D2) edge (F2);
    \path [-] (E2) edge (G2);
    \path [-] (F2) edge (G2);
    \path [-] (F2) edge (M2);
    \path [-] (H2) edge (I2);
    \path [-] (H2) edge (K2);
    \path [-] (K2) edge (I2);
    \path [-] (I2) edge (J2);
    \path [-] (I2) edge (L2);
    \path [-] (L2) edge (J2);
    \path [-] (K2) edge (M2);
    \path [-] (K2) edge (N2);
    \path [-] (K2) edge (L2);
    \path [-] (M2) edge (N2);
    \path [-] (O2) edge (N2);
    \path [-] (O2) edge (L2);
    \path [-] (P2) edge (O2);
    \path [-] (a2) edge (b2);
    \path [-] (a2) edge (c2);
    \path [-] (c2) edge (b2);
    \path [-] (d2) edge (c2);
    \path [-] (e2) edge (c2);
    \path [-] (d2) edge (e2);
    \path [-] (d2) edge (f2);
    \path [-] (e2) edge (g2);
    \path [-] (f2) edge (g2);
    \path [-] (f2) edge (m2);
    \path [-] (h2) edge (i2);
    \path [-] (h2) edge (k2);
    \path [-] (k2) edge (i2);
    \path [-] (i2) edge (j2);
    \path [-] (i2) edge (l2);
    \path [-] (l2) edge (j2);
    \path [-] (k2) edge (m2);
    \path [-] (k2) edge (n2);
    \path [-] (k2) edge (l2);
    \path [-] (m2) edge (n2);
    \path [-] (o2) edge (n2);
    \path [-] (o2) edge (l2);
    \path [-] (p2) edge (o2);
    \draw[-] (I2) .. controls (1.5,1.25+7) and (1.5,2.75+7) .. (i2);
    
    \path [-] (A1) edge (a);
    \path [-] (A2) edge (a1);
\end{scope}

% clause gedget >={Stealth[black]},every node/.style={fill=white,circle},
\begin{scope}[every node/.style={circle,draw,fill=black,minimum size=1mm,inner sep=1pt},>={Stealth[black]}]
    \node (a) at (4,7.75) {};
    \path [-] (4,7.75) edge (4,8.25);
    \draw[-] (4,7.75) edge (6.5,4.75);
    \node (b) at (4,5.75) {};  
    \path [-] (4,5.75) edge (6.5,10.75); 
    \draw[-] (4,5.75) edge (4,6.25);
    \node (c) at (4,1.25) {};
    \path [-] (4,1.25) edge (6.5,2.75);
    \draw[-] (4,1.25) edge (4,1.75);
    \node (d) at (0+6.5,2.25-1) {};
    \node (e) at (0+6.5,1.25-1) {};
    \node (f) at (0+6.5,0.25-1) {};
    \node (g) at (0.5+6.5,2.5-1) {};
    \node (h) at (0.5+6.5,2-1) {};
    \node (i) at (0.5+6.5,1.5-1) {};
    \node (j) at (0.5+6.5,1-1) {};
    \node (k) at (0.5+6.5,0.5-1) {};
    \node (l) at (0.5+6.5,0-1) {};
    \node (m) at (1+6.5,2.5-1) {};
    \node (n) at (1+6.5,2-1) {};
    \node (o) at (1+6.5,1.5-1) {};
    \node (p) at (1+6.5,1-1) {};
    \node (q) at (1+6.5,0.5-1) {};
    \node (r) at (1+6.5,0-1) {};
    \node (s) at (1.5+6.5,1.5-1) {};
    \node (t) at (2.5+6.5,2.5-1) {};
    \node (u) at (2.5+6.5,0.5-1) {};
    \node (v) at (3.5+6.5,1.5-1) {};
    \path [-] (a) edge (d);
    \path [-] (d) edge (g);
    \path [-] (d) edge (h);
    \path [-] (g) edge (h);
    \path [-] (g) edge (m);
    \path [-] (h) edge (n);
    \path [-] (m) edge (n);
    \path [-] (m) edge (t);
    \path [-] (b) edge (e);
    \path [-] (i) edge (e);
    \path [-] (j) edge (e);
    \path [-] (i) edge (j);
    \path [-] (i) edge (o);
    \path [-] (j) edge (p);
    \path [-] (o) edge (p);
    \path [-] (o) edge (s);
    \path [-] (s) edge (u);
    \path [-] (s) edge (v);
    \path [-] (v) edge (u);
    \path [-] (t) edge (u);
    \path [-] (t) edge (v);
    \path [-] (t) edge (s);
    \path [-] (c) edge (f);
    \path [-] (f) edge (k);
    \path [-] (f) edge (l);
    \path [-] (k) edge (l);
    \path [-] (k) edge (q);
    \path [-] (l) edge (r);
    \path [-] (q) edge (r);
    \path [-] (q) edge (u);
    
    \node (a1) at (4,8.25) {};
    \path [-] (4,8.25) edge (6.5,8.25);
    \draw[-] (4,8.25) edge (4,8.75);
    \node (b1) at (4,4.75) {};   
    \path [-] (4,4.75) edge (6.5,7.25);
    \draw[-] (4,4.75) edge (4,5.25);
    \node (c1) at (4,1.75) {};
    \path [-] (4,1.75) edge (3.5,1.5);
    \node (d1) at (0+6.5,2.25-1+3.5) {};
    \node (e1) at (0+6.5,1.25-1+3.5) {};
    \node (f1) at (0+6.5,0.25-1+3.5) {};
    \node (g1) at (0.5+6.5,2.5-1+3.5) {};
    \node (h1) at (0.5+6.5,2-1+3.5) {};
    \node (i1) at (0.5+6.5,1.5-1+3.5) {};
    \node (j1) at (0.5+6.5,1-1+3.5) {};
    \node (k1) at (0.5+6.5,0.5-1+3.5) {};
    \node (l1) at (0.5+6.5,0-1+3.5) {};
    \node (m1) at (1+6.5,2.5-1+3.5) {};
    \node (n1) at (1+6.5,2-1+3.5) {};
    \node (o1) at (1+6.5,1.5-1+3.5) {};
    \node (p1) at (1+6.5,1-1+3.5) {};
    \node (q1) at (1+6.5,0.5-1+3.5) {};
    \node (r1) at (1+6.5,0-1+3.5) {};
    \node (s1) at (1.5+6.5,1.5-1+3.5) {};
    \node (t1) at (2.5+6.5,2.5-1+3.5) {};
    \node (u1) at (2.5+6.5,0.5-1+3.5) {};
    \node (v1) at (3.5+6.5,1.5-1+3.5) {};
    \path [-] (a1) edge (d1);
    \path [-] (d1) edge (g1);
    \path [-] (d1) edge (h1);
    \path [-] (g1) edge (h1);
    \path [-] (g1) edge (m1);
    \path [-] (h1) edge (n1);
    \path [-] (m1) edge (n1);
    \path [-] (m1) edge (t1);
    \path [-] (b1) edge (e1);
    \path [-] (i1) edge (e1);
    \path [-] (j1) edge (e1);
    \path [-] (i1) edge (j1);
    \path [-] (i1) edge (o1);
    \path [-] (j1) edge (p1);
    \path [-] (o1) edge (p1);
    \path [-] (o1) edge (s1);
    \path [-] (s1) edge (u1);
    \path [-] (s1) edge (v1);
    \path [-] (v1) edge (u1);
    \path [-] (t1) edge (u1);
    \path [-] (t1) edge (v1);
    \path [-] (t1) edge (s1);
    \path [-] (c1) edge (f1);
    \path [-] (f1) edge (k1);
    \path [-] (f1) edge (l1);
    \path [-] (k1) edge (l1);
    \path [-] (k1) edge (q1);
    \path [-] (l1) edge (r1);
    \path [-] (q1) edge (r1);
    \path [-] (q1) edge (u1);
    
    \node (a2) at (4,8.75) {};
    \path [-] (4,8.75) edge (3.5,8.5);
    \node (b2) at (4,5.25) {};   
    \path [-] (4,5.25) edge (3.5,5);
    \node (c2) at (4,2.25) {};
    \path [-] (4,2.25) edge (6.5,9.75);
    \draw[-] (4,2.25) edge (4,2.75);
    \node (d2) at (0+6.5,2.25-1+7) {};
    \node (e2) at (0+6.5,1.25-1+7) {};
    \node (f2) at (0+6.5,0.25-1+7) {};
    \node (g2) at (0.5+6.5,2.5-1+7) {};
    \node (h2) at (0.5+6.5,2-1+7) {};
    \node (i2) at (0.5+6.5,1.5-1+7) {};
    \node (j2) at (0.5+6.5,1-1+7) {};
    \node (k2) at (0.5+6.5,0.5-1+7) {};
    \node (l2) at (0.5+6.5,0-1+7) {};
    \node (m2) at (1+6.5,2.5-1+7) {};
    \node (n2) at (1+6.5,2-1+7) {};
    \node (o2) at (1+6.5,1.5-1+7) {};
    \node (p2) at (1+6.5,1-1+7) {};
    \node (q2) at (1+6.5,0.5-1+7) {};
    \node (r2) at (1+6.5,0-1+7) {};
    \node (s2) at (1.5+6.5,1.5-1+7) {};
    \node (t2) at (2.5+6.5,2.5-1+7) {};
    \node (u2) at (2.5+6.5,0.5-1+7) {};
    \node (v2) at (3.5+6.5,1.5-1+7) {};
    \path [-] (a2) edge (d2);
    \path [-] (d2) edge (g2);
    \path [-] (d2) edge (h2);
    \path [-] (g2) edge (h2);
    \path [-] (g2) edge (m2);
    \path [-] (h2) edge (n2);
    \path [-] (m2) edge (n2);
    \path [-] (m2) edge (t2);
    \path [-] (b2) edge (e2);
    \path [-] (i2) edge (e2);
    \path [-] (j2) edge (e2);
    \path [-] (i2) edge (j2);
    \path [-] (i2) edge (o2);
    \path [-] (j2) edge (p2);
    \path [-] (o2) edge (p2);
    \path [-] (o2) edge (s2);
    \path [-] (s2) edge (u2);
    \path [-] (s2) edge (v2);
    \path [-] (v2) edge (u2);
    \path [-] (t2) edge (u2);
    \path [-] (t2) edge (v2);
    \path [-] (t2) edge (s2);
    \path [-] (c2) edge (f2);
    \path [-] (f2) edge (k2);
    \path [-] (f2) edge (l2);
    \path [-] (k2) edge (l2);
    \path [-] (k2) edge (q2);
    \path [-] (l2) edge (r2);
    \path [-] (q2) edge (r2);
    \path [-] (q2) edge (u2);
    
    \node (a4) at (4,9.5) {};
    \path [-] (4,9.5) edge (3.5,9.5);
    \node (b4) at (4,6.25) {};   
    \path [-] (4,6.25) edge (3.5,6);
    \node (c4) at (4,2.75) {};
    \path [-] (4,2.75) edge (3.5,2.5);
    \node (d4) at (0+6.5,2.25-1+10.5) {};
    \node (e4) at (0+6.5,1.25-1+10.5) {};
    \node (f4) at (0+6.5,0.25-1+10.5) {};
    \node (g4) at (0.5+6.5,2.5-1+10.5) {};
    \node (h4) at (0.5+6.5,2-1+10.5) {};
    \node (i4) at (0.5+6.5,1.5-1+10.5) {};
    \node (j4) at (0.5+6.5,1-1+10.5) {};
    \node (k4) at (0.5+6.5,0.5-1+10.5) {};
    \node (l4) at (0.5+6.5,0-1+10.5) {};
    \node (m4) at (1+6.5,2.5-1+10.5) {};
    \node (n4) at (1+6.5,2-1+10.5) {};
    \node (o4) at (1+6.5,1.5-1+10.5) {};
    \node (p4) at (1+6.5,1-1+10.5) {};
    \node (q4) at (1+6.5,0.5-1+10.5) {};
    \node (r4) at (1+6.5,0-1+10.5) {};
    \node (s4) at (1.5+6.5,1.5-1+10.5) {};
    \node (t4) at (2.5+6.5,2.5-1+10.5) {};
    \node (u4) at (2.5+6.5,0.5-1+10.5) {};
    \node (v4) at (3.5+6.5,1.5-1+10.5) {};
    \path [-] (a4) edge (d4);
    \path [-] (d4) edge (g4);
    \path [-] (d4) edge (h4);
    \path [-] (g4) edge (h4);
    \path [-] (g4) edge (m4);
    \path [-] (h4) edge (n4);
    \path [-] (m4) edge (n4);
    \path [-] (m4) edge (t4);
    \path [-] (b4) edge (e4);
    \path [-] (i4) edge (e4);
    \path [-] (j4) edge (e4);
    \path [-] (i4) edge (j4);
    \path [-] (i4) edge (o4);
    \path [-] (j4) edge (p4);
    \path [-] (o4) edge (p4);
    \path [-] (o4) edge (s4);
    \path [-] (s4) edge (u4);
    \path [-] (s4) edge (v4);
    \path [-] (v4) edge (u4);
    \path [-] (t4) edge (u4);
    \path [-] (t4) edge (v4);
    \path [-] (t4) edge (s4);
    \path [-] (c4) edge (f4);
    \path [-] (f4) edge (k4);
    \path [-] (f4) edge (l4);
    \path [-] (k4) edge (l4);
    \path [-] (k4) edge (q4);
    \path [-] (l4) edge (r4);
    \path [-] (q4) edge (r4);
    \path [-] (q4) edge (u4);
\end{scope}

% auxiliary gedget
\begin{scope}[every node/.style={circle,draw,fill=black,minimum size=1mm,inner sep=1pt},>={Stealth[black]}]
    \node (a) at (0,11.25) {};
    \node (b) at (0.5,11.5) {};
    \node (c) at (0.5,11) {};
    \node (d) at (1,11.5) {};
    \node (e) at (1,11) {};
    \path [-] (a) edge (b);
    \path [-] (a) edge (c);
    \path [-] (c) edge (b);
    \path [-] (d) edge (b);
    \path [-] (c) edge (e);
    \path [-] (d) edge (e);
    \path [-] (a) edge (0,10.5);
    
    \node (f) at (0,-1.5) {};
    \path[-] (0,0.5) edge (f);
    
    \node (g) at (11,-1.5) {};
    \node (h) at (11,0.5) {};
    \node (i) at (11,4) {};
    \node (j) at (11,7.5) {};
    \node (k) at (11,11) {};                
    \path [-] (f) edge (g);
    \path [-] (g) edge (h);
    \path [-] (h) edge (i);
    \path [-] (i) edge (j);
    \path [-] (j) edge (k);
    \path [-] (h) edge (10,0.5);
    \path [-] (i) edge (10,4);
    \path [-] (j) edge (10,7.5);
    \path [-] (k) edge (10,11);
\end{scope}
\end{tikzpicture}
\caption{The reduction from a satisfiable 3-CNF $(X \vee Y \vee Z)\wedge(\bar{X} \vee \bar{Y} \vee Z)\wedge(\bar{X} \vee \bar{Y} \vee \bar{Z})\wedge(\bar{X} \vee Y \vee \bar{Z})$ to a moral graph with maximum degree $5$. From top to bottom, the variable gadgets are for $X, Y, Z$ and the clause gadgets are for $F_1,F_2,F_3,F_4$.}
\label{fg:3cnf_fully_dir}
\end{figure}
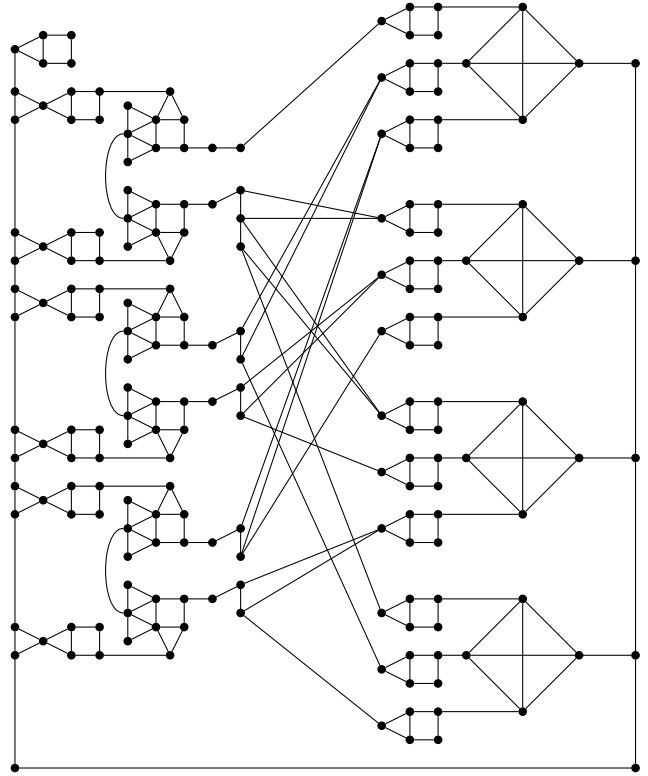

\newpage 
\bibliography{causal_discovery_ref_list}
\bibliographystyle{icml2019}
\end{document}